\theoremstyle{plain}
\newtheorem{theorem}{Theorem}[section]
\newtheorem{corollary}[theorem]{Corollary}
\theoremstyle{definition}
\newtheorem{assumption}[theorem]{Assumption}
\theoremstyle{remark}
\newcommand{\cmark}{\textcolor{green!60!black}{\ding{51}}} 
\newcommand{\xmark}{\textcolor{red!70!black}{\ding{55}}}   
\newtcolorbox{tbox}[2][]{
  colback=white,
  colframe=blue!75!black,
  fonttitle=\bfseries,
  sharp corners,
  enhanced,
  breakable,
  title={#2}, 
  #1
}
\newcommand{\newterm}[1]{{\emph{#1}}}
\def\eqref#1{equation~\ref{#1}}
\def\1{\bm{1}}
\def\vzero{{\bm{0}}}
\def\va{{\bm{a}}}
\def\vk{{\bm{k}}}
\def\vo{{\bm{o}}}
\def\vq{{\bm{q}}}
\def\vs{{\bm{s}}}
\def\vv{{\bm{v}}}
\def\vw{{\bm{w}}}
\def\vx{{\bm{x}}}
\def\mA{{\bm{A}}}
\def\mH{{\bm{H}}}
\def\mK{{\bm{K}}}
\def\mO{{\bm{O}}}
\def\mQ{{\bm{Q}}}
\def\mS{{\bm{S}}}
\def\mV{{\bm{V}}}
\def\mW{{\bm{W}}}
\def\mX{{\bm{X}}}
\DeclareMathAlphabet{\mathsfit}{\encodingdefault}{\sfdefault}{m}{sl}
\SetMathAlphabet{\mathsfit}{bold}{\encodingdefault}{\sfdefault}{bx}{n}
\def\gN{{\mathcal{N}}}
\def\gR{{\mathcal{R}}}
\def\gS{{\mathcal{S}}}
\def\gV{{\mathcal{V}}}
\def\sP{{\mathbb{P}}}
\newcommand{\R}{\mathbb{R}}
\newcommand{\softmax}{\mathrm{softmax}}
\title{Threshold Differential Attention for Sink-Free, Ultra-Sparse, and Non-Dispersive Language Modeling}
\author{
  Xingyue Huang\textsuperscript{1}\thanks{Equal contribution.}\thanks{Work done as intern in Snap Inc.},
  Xueying Ding\textsuperscript{2}\footnotemark[1]\footnotemark[2],
  Mingxuan Ju\textsuperscript{3},
  \textbf{Yozen Liu}\textsuperscript{\textbf{3}},
  \textbf{Neil Shah}\textsuperscript{\textbf{3}},
  \textbf{Tong Zhao}\textsuperscript{\textbf{3}}
  \\[0.6ex]
\textsuperscript{1}University of Oxford\;
\textsuperscript{2}Carnegie Mellon University\;
\textsuperscript{3}Snap Inc.
}
\begin{document}
\maketitle
\begin{abstract}

Softmax attention struggles with long contexts due to structural limitations: the strict sum-to-one constraint forces attention sinks on irrelevant tokens, and probability mass disperses as sequence lengths increase. 
We tackle these problems with Threshold Differential Attention (TDA), a sink-free attention mechanism that achieves ultra-sparsity and improved robustness at longer sequence lengths without the computational overhead of projection methods or the performance degradation caused by noise accumulation of standard rectified attention. TDA applies row-wise extreme-value thresholding with a length-dependent gate, retaining only exceedances. Inspired by the differential transformer, TDA also subtracts an inhibitory view to enhance expressivity. 
Theoretically, we prove that TDA controls the expected number of spurious survivors per row to $O(1)$ and that consensus spurious matches across independent views vanish as context grows. Empirically, TDA produces $>99\%$ exact zeros and eliminates attention sinks while maintaining competitive performance on standard and long-context benchmarks. 
\end{abstract}

\section{Introduction}
The widespread success of large language models (LLMs)~\citep{vaswani2017attention,achiam2023gpt,touvron2023llama} has fueled growing interest in architectural innovations that can scale to increasingly long contexts~\citep{beltagy2020longformer,zaheer2020big,dao2022flashattention,xiao2023streamingllm}. 
In most modern architectures, Softmax serves as a core activation function: converting scaled dot products into normalized attention weights. 
During pretraining, Softmax is favored for its differentiable normalization with efficient vectorizable implementations
\citep{vaswani2017attention,joulin2017efficient}. 
However, despite its ubiquity, there has been a growing debate over its suitability for next-generation LLMs.
Specifically, Softmax's strict sum-to-one constraint induces two major pathologies: (\emph{i}) \emph{attention sink}, where the model is forced to assign non-trivial probability mass to irrelevant tokens to satisfy normalization constraints; and (\emph{ii}) \emph{attention dispersion}, where probability mass is progressively diluted as sequence length grows, weakening the model's focus on salient tokens
and making it inefficient for long-sequence modeling.

To mitigate the aforementioned limitations, prior work has explored \emph{sparse attention mechanisms} that assign exact zero weights to alleviate attention dispersion~\citep{velickovic2024softmax}. Various methods \citep{martins2016sparsemax,peters2019sparse,asentmax2025} replace exponential normalization with sparse projections to improve long-context behavior. However, these approaches typically rely on computationally expensive sorting or iterative projection that scale poorly.
Moreover, as they still enforce sum-to-one normalization, they do not fundamentally resolve the attention sink phenomenon.

Alternatively, \emph{rectified attention mechanisms} replace softmax with ReLU-style activations (e.g., ReLA)~\citep{zhang2021rela,shen2023reluformer}, removing the sum-to-one constraint and naturally inducing sparsity without expensive sorting. 
While early proposals such as ReLA is efficient without enforcing attention sink, they often underperform dense Softmax attention, particularly in long-context regimes, due to limited expressivity and noise accumulation.

In this work, we revisit rectified attention and identify two core challenges to close the performance gap: increasing performance via enhancing expressive capacity, and mitigating long-context degradation by suppressing extreme-value noise. 
To enhance expressivity, we adopt the underlying view of differential Transformer by computing an \emph{inhibitory} view to inhibit the noise, which enables negative attention weights, thereby increasing expressive capacity. 
To tackle long-context limitations, we derived from extreme-value theory and introduced a thresholding mechanism that progressively filters noise as the context length grows.
We term this approach Threshold Differential Attention (\textbf{TDA}). Our contribution includes: 

\begin{itemize}[leftmargin=0.4cm,nosep]
    \item We propose TDA, a drop-in, \emph{non-softmax} attention mechanism. By applying length-dependent thresholding with an inhibitory differential view, TDA yields a sink-free, ultra-sparse attention that is robust to long-context modeling.
    \item We provide a theoretical analysis under sub-Gaussian assumptions. While spurious attention scores in dense models grow with context length, we prove that TDA effectively controls this noise: the expected number of spurious survivors per row remains $O(1)$, and consensus spurious matches across independent views vanish.
    \item Empirically, we show that TDA achieves $>99\%$ sparsity while maintaining competitive performance on both standard and long-context QA benchmarks, and eliminates attention sinks across layers and heads. 
    \item We provide a fused Triton kernel implementation\footnote{\url{https://github.com/snap-research/TDA.git}}
    and show that the resulting TRA kernel is competitive with FlashAttention-2 under BF16, achieving consistent speedups at long contexts. 
\end{itemize}

\section{Related Works} 

\begin{table}[t]
\centering
\setlength{\tabcolsep}{4pt}
\resizebox{0.48\textwidth}{!}{
\begin{tabular}{lcccc}
\toprule
\textbf{Method} &
\textbf{Exact 0} &
\textbf{Negative} &
\textbf{No sum-to-1} &
\textbf{Length-aware} \\
\midrule
Softmax        & \xmark & \xmark & \xmark & \xmark \\
SSMax          & \xmark & \xmark & \xmark & \cmark \\
Entmax         & \cmark & \xmark & \xmark & \xmark \\
LSSA          & \xmark & \xmark & \xmark & \cmark \\
ReLA           & \cmark & \xmark & \cmark & \xmark \\
Diff Softmax   & \xmark & \cmark & \cmark & \xmark \\
\midrule
TRA (ours)     & \cmark & \xmark & \cmark & \cmark \\
TDA (ours)     & \cmark & \cmark & \cmark & \cmark \\
\bottomrule
\end{tabular}
}
\caption{\textbf{Feature comparison of attention mechanisms.}
\textbf{Exact 0}: can output exact zeros.
\textbf{Negative}: can output signed weights.
\textbf{No sum-to-1}: does not enforce row normalization.
\textbf{Length-aware}: explicitly depends on sequence length.}
\label{tab:attn_feature_comparison}
\end{table}

\paragraph{Attention Sink.} The attention sink phenomenon, wherein models allocate excessive probability mass to initial tokens, is a structural byproduct of the softmax sum-to-one requirement. \citet{barbero2025firsttoken} provide a formal analysis connecting attention sinks to the prevention of representational collapse in deep Transformers. Similarly, \citet{gu2025emerges} show that sinks emerge as learned key biases that store excess mass without influencing value computation, and that replacing softmax with sigmoid-style attention can prevent their emergence. To mitigate this, Softpick~\citep{zuhri2025softpick} uses a non-sum-to-one kernel to bypass sinks by design. Recently, \citet{qiu2025gatedattention} proposed \emph{Gated Attention}, which applies a head-specific sigmoid gate after SDPA; they report that the induced sparsity mitigates attention sinks (and ``massive activations'') and improves long-context extrapolation.

\paragraph{Sparse Attention for Attention Dispersion.} 
Standard attention mechanisms compute a dense probability distribution that tends toward uniformity as sequence length increases, an effect known as \emph{attention dispersion}~\citep{velickovic2024softmax}. To counter this, several methods were proposed to produce exact-zero attention weights. \citet{martins2016sparsemax} and the $\alpha$-entmax family \citep{peters2019sparse} replace the softmax exponential with a transformation that projects logits onto a sparse support. 
Recent variants like ASEntmax \citep{asentmax2025} and AdaSplash~\citep{goncalves2025adasplash} further refine this by adaptively calibrating sparsity or utilizing GPU-efficient kernels to improve length generalization.

Orthogonal to projection methods, rectified attention models such as ReLA \citep{zhang2021rela} show high performance without the normalization constraints of the exponential kernel. ReLUFormer \citep{shen2023reluformer} replaces Softmax with ReLU-style activations inside attention and showcases improved performance on long-sequence settings. 
\citet{zuhri2025softpick} advance this by removing the sum-to-one constraint, employing a non-probabilistic, rectified activation that yields sparse outputs without the computational overhead of iterative projection. Furthermore, Sliced ReLU Attention \citep{boufadene2025sliced} demonstrates that applying ReLU to projected scores maintains the expressive power of standard Transformers while enabling quasi-linear efficiency via sorting. 

\paragraph{Differential Attention.}
Differential Attention \citep{ye2025diff} introduces an inhibitory view that enables explicit noise cancellation, yielding a \emph{signed}
attention signal, encoding both positive and negative interactions.
Dex~\citep{kong2025dex} further analyzes the interaction between this machanism and pretrained self-attention, proposing modifications that retain pretrained capabilities while unlocking the benefits of differential inhibition.

\paragraph{Attention Scaling for Length Generalization.}
Complementary to changing the attention activation, long-context performance can be improved via length-dependent scaling.
\citet{nakanishi2025scalable} propose SSMax to adjust attention scaling with sequence length to mitigate softmax flattening ,
and recent theory formalizes a critical scaling regime for long-context transformers \citep{chen2025criticalattention}.
Relatedly, LSSA~\citep{gao2025softplus} replaces the exponential nonlinearity with a Softplus transform and introduces a length-dependent scaling factor
for better extrapolations.  
We summarize representative methods in \cref{tab:attn_feature_comparison}.

\section{Preliminaries}
\label{sec:prelim}

\paragraph{Notation.}
Let $T\in\mathbb{N}$ denote the sequence length and $\gV$ be the vocabulary with size $|\gV|$.
A token sequence is $\vx_{1:T}$, where $\vx_i\in\{1,\dots,|\gV|\}$.
The model (embedding) dimension is $d_{\mathrm{model}}$, the number of layers is $L$, the number of attention heads is $H$,
and the per-head dimension is $d$.
$\vzero$ denotes an all-zeros matrix of the appropriate shape.
We write $\langle \cdot,\cdot\rangle$ for the Euclidean inner product.

\paragraph{Transformer architecture.}
A Transformer stacks $L$ identical blocks, each consisting of multi-head self-attention (MHSA) and a position-wise
feed-forward network (FFN), wrapped with residual connections and normalization
\citep{vaswani2017attention,he2016residual,ba2016layernorm}.
The input to the first layer consists of token embeddings summed with positional encodings. While earlier architectures used absolute position vectors \citep{vaswani2017attention,devlin2019bert}, modern LLMs typically incorporate relative or rotary position information directly into the attention mechanism \citep{shaw2018selfattention,su2021roformer}.
Each layer $\ell\in\{1,\dots,L\}$ computes
\[
\begin{aligned}
\widehat{\mH}^{(\ell)} &= \mathrm{Norm}\Big(\mH^{(\ell-1)} + \mathrm{MHSA}\big(\mH^{(\ell-1)}\big)\Big),\\
\mH^{(\ell)} &= \mathrm{Norm}\Big(\widehat{\mH}^{(\ell)} + \mathrm{FFN}\big(\widehat{\mH}^{(\ell)}\big)\Big),
\end{aligned}
\]
where causal masking (decoder-only LMs) restricts each position to attend to its prefix. 
$\mathrm{Norm}$ denotes LayerNorm \citep{ba2016layernorm}. $\mathrm{MHSA}$ denotes Multi-head self-attention, and $\mathrm{FFN}$ is a Multi-layer Perceptron.

\paragraph{Attention Mechanisms.}
\label{sec:attn}
Let $\vx_{1:T}$ be a length-$T$ token sequence with hidden states $\mX\in\R^{T\times d_{\mathrm{model}}}$.
For a single attention head with head dimension $d$, define projections
\[
\begin{aligned}
\mQ = \mX \mW_Q,\qquad
\mK = \mX \mW_K,\qquad
\mV = \mX \mW_V,
\end{aligned}
\]
where $\mQ,\mK,\mV\in\R^{T\times d}$ and $\mW_Q,\mW_K,\mW_V\in\R^{d_{\mathrm{model}}\times d}$.
Scores are
\[
\begin{aligned}
\mS \;=\; \frac{\mQ\mK^\top}{\sqrt{d}} \in \R^{T\times T},
\qquad
\mS_{ij} \;=\; \frac{\langle \vq_i,\vk_j\rangle}{\sqrt{d}}.
\end{aligned}
\]
With causal masking, query $i$ attends to $\gS(i)=\{1,\dots,i\}$ by setting $\mS_{ij}=-\infty$ for $j\notin\gS(i)$.

\paragraph{Standard (softmax) Attention.}
Softmax attention uses row-stochastic weights \citep{vaswani2017attention}:
\[
\begin{aligned}
\mA_{ij} \;=\; \frac{\exp(\mS_{ij})}{\sum_{t\in\gS(i)} \exp(\mS_{it})},
\qquad
\mO=\mA\mV.
\end{aligned}
\]

\paragraph{Rectified Linear Attention (ReLA).}
ReLA replaces softmax with a rectifier and removes the sum-to-one constraint \citep{zhang2021rela}:
\[
\begin{aligned}
\mA \;=\; \max(\mS,\vzero), \qquad
\mO \;=\; \mathrm{Norm}(\mA\mV).
\end{aligned}
\]
where $\mathrm{Norm}$ is LayerNorm~\citep{ba2016layernorm}.

\paragraph{Differential (softmax) Attention.}
Differential attention constructs two softmax maps ($t_1,t_2$), and subtracts them \citep{ye2025diff,kong2025dex}:
\[
\begin{aligned}
\mA^{(t)} &= \softmax\!\left(\frac{\mQ^{(t)}(\mK^{(t)})^\top}{\sqrt{d}}\right),\quad t\in\{1,2\},\\
\mA^{\Delta} &= \mA^{(1)} - \lambda\,\mA^{(2)},\qquad
\mO = \mA^{\Delta}\mV,
\end{aligned}
\]
where $\lambda$ is a learnable (often layer-dependent) scalar. Note $\mA^{\Delta}$ may be signed. 

\paragraph{Attention Dispersion.}
We adapt the notion of \emph{attention dispersion} from \citet{asentmax2025} for unnormalized and signed attention weights. Let $\va_i \in \mathbb{R}^i$ be the vector of attention weights for a query at position $i$ (where $\va_{ij}=0$ if masked). We define the \emph{effective entropy} $H(\va_i)$ as the Shannon entropy of the $\ell_1$-normalized absolute weights:
\[
\hat{p}_{ij} = \frac{|\va_{ij}|}{\|\va_i\|_1 + \epsilon}, \qquad H(\va_i) = - \sum_{j=1}^{i} \hat{p}_{ij} \log \hat{p}_{ij}.
\]
The attention mechanism is said to be \emph{dispersive} if the effective entropy grows at the same rate as the maximum possible entropy (i.e., the uniform distribution):
\[
\lim_{i \to \infty} \frac{\mathbb{E}[H(\va_i)]}{\log i} = 1.
\]

Conversely, the mechanism is \emph{non-dispersive} if this ratio approaches 0.

\paragraph{Attention Sink.}
Attention sinks describe abnormally large attention allocated to a fixed \emph{position} (often the first token)~\citep{xiao2023streamingllm,gu2025emerges}.
Importantly, some attention mechanisms we consider produce weights that are not probabilities (e.g., ReLA is unnormalized, differential attention can be signed). To compare positional dominance across such mechanisms, for layer $\ell$ and head $h$, we define a
\newterm{generalized sink ratio} by first $\ell_1$-normalizing absolute weights per query following \citet{gu2025emerges}:
\begin{equation*}
\tilde{\mathbf{A}}^{\ell,h}_k = \frac{1}{T-k+1} \sum_{i=k}^{T} \frac{\big|\mathbf{A}^{\ell,h}_{i,k}\big|}{\sum_{t\in\mathcal{S}(i)} \big|\mathbf{A}^{\ell,h}_{i,t}\big|}
\end{equation*}
and then reporting the times-uniform ratio
$\mathrm{gSinkRatio}^{\ell,h}(k):=\tilde{\mathbf{A}}^{\ell,h}_k/\tilde{\mathbf{A}}^{\mathrm{unif}}_k$, where
$$\tilde{\mathbf{A}}^{\mathrm{unif}}_k:=\frac{1}{T-k+1}\sum_{i=k}^{T}\frac{1}{|\gS(i)|}.$$
We write $\mathrm{gSinkRatio}(k)$ as the average (over layers and heads) of the ratio between the total attention mass assigned to key position $k$.

\section{Methodology}
We first introduce Threshold Rectified Attention (TRA), which scales the rectification threshold with context length to suppress extreme-value noise. We then extend it to Threshold Differential Attention (TDA), which subtracts an inhibitory thresholded view to further cancel spurious matches noise.

\subsection{Threshold Rectified Attention (TRA)}
\label{sec:tra}

Rectified attention replaces Softmax with a simple rectifier, producing \emph{un-normalized} and often \emph{sparse} weights: it assigns exact zeros and avoids the sum-to-one constraint that underlies attention sinks
\citep{zhang2021rela,shen2023reluformer}. 
However, plain ReLA-style attention often underperforms Softmax attention in long-context regime~\citep{zuhri2025softpick}.  
We attribute this to \emph{noise accumulation}: as the context grows, unrelated
(query, key) pairs produce larger \emph{maximum} dot-products by chance (extreme values), and a \emph{fixed} rectifier
threshold eventually fails to inhibit such spurious pairs, polluting the value aggregation.

\begin{assumption}[Sub-Gaussian noise per row]
\label{ass:tra-subg}
Fix a query position $i$ 
with visible set $\gS(i)=\{1,\dots,i\}$.
For any \emph{noise} key $j \in \gN(i)$, the similarity
$\vs_{ij}=\langle \tilde \vq_i,\tilde \vk_j\rangle$ is mean-zero and
$\sigma^2/d$-sub-Gaussian: 
\(
\mathbb{E}\!\left[\exp(t \vs_{ij})\right] \le \exp\!\Big(\frac{\sigma^2}{2d}t^2\Big),
\forall t\in\mathbb{R}.
\)
\end{assumption}
\begin{assumption}[Bounded relevant survivors]
\label{ass:bounded-relevant}
For each row, the number of relevant keys exceeding threshold is uniformly bounded by a constant $r$.
\end{assumption}

\paragraph{Definition.} Let $\vq_i,\vk_j,\vv_j\in\mathbb{R}^{d}$ denote the per-head query, key, and value vectors.
We normalize queries and keys,
\(
\tilde \vq_i := \frac{\vq_i}{\|\vq_i\|_2} \text{ and } \tilde \vk_j := \frac{\vk_j}{\|\vk_j\|_2},
\)
and compute scores as
$
\vs_{ij} := \langle \tilde \vq_i,\tilde \vk_j\rangle.
$
TRA applies a \emph{length-dependent}
threshold $\tau_i$:
\begin{align}
\tau_i &:=  \beta \sqrt{\frac{2\log\!\big(\frac{i+1}{\kappa}\big)}{d}}, \qquad \kappa>0. \label{eq:tau}\\
\va_{ij} &:= \big(\vs_{ij}-\tau_i\big)_+^{\,p}\,\quad \text{if $j\le i$}, \nonumber\\
\vo_i &:= \mathrm{Norm}\!\left(\sum_{j=1}^{i} \va_{ij} \vv_j\right). \nonumber
\end{align}

Here $(x)_+ = \max(x,0)$, $\beta>0$ is a learnable scalar (that controls the threshold size and overall sparsity), $p\ge 1$ is the power, and $\mathrm{Norm}$ is RMSNorm. 
The \emph{row-wise gate} $\tau_i$ increases with
causal context size $|\gS(i)|=i$, ensuring the rectifier remains selective as $i$ grows. 
This scaling follows the extreme-value behavior of sub-Gaussian noise: \citet{vershynin2018high} gives the bound $\mathbb{P}(\max_{j\le i} X_j > \tau)\le i\exp(-\tau^2/(2\sigma^2))$; applying this to dot-products with variance proxy $\mathcal{O}(1/d)$ yields the threshold scale $\tau \asymp \sqrt{2\log i/d}$.

\paragraph{Properties.} Let $\gN(i)\subseteq \gS(i)$ denote the set of noise keys (irrelevant to the query) at row $i$. A key is a \emph{spurious survivor} if $j\le i$ and $j\in\gN(i)$ yet $\vs_{ij}>\tau_i$.
The total number of spurious survivors is
\[
S_i\;:=\; \sum_{j\in\gN(i)} \1\!\left(\vs_{ij}>\tau_i\right),
\]
where $\1$ is the indicator function.
$\tau_i$ tracks the extreme-value scale of spurious dot-products, and hyperparameter $\kappa$ to control the \emph{expected number of spurious survivors per row}.
Hence, TRA ensures the tail exceedance remain controlled, yielding stable sparsity and reducing long-context corruption from chance matches.

\begin{theorem}[TRA keeps $O(1)$ spurious survivors per row]
\label{thm:tra-survivors}
Under Assumption~\ref{ass:tra-subg}, fix any $\kappa>0$, then for all $i\ge 1$,
\(
\mathbb{E}[S_i] \;\le\; \kappa.
\)
\end{theorem}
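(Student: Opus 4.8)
The plan is a straightforward first-moment (union-bound) argument, requiring no distributional structure beyond \cref{ass:tra-subg}. Since $S_i=\sum_{j\in\gN(i)}\1(\vs_{ij}>\tau_i)$ is a sum of indicators, linearity of expectation gives $\mathbb{E}[S_i]=\sum_{j\in\gN(i)}\mathbb{P}(\vs_{ij}>\tau_i)$, so it suffices to bound a single upper-tail probability $\mathbb{P}(\vs_{ij}>\tau_i)$ and multiply by $|\gN(i)|\le i$. It is worth noting up front that \emph{no independence} among the $\vs_{ij}$ is needed here (independence would only enter a concentration statement about $S_i$, not its mean), and that \cref{ass:bounded-relevant} plays no role, since $S_i$ counts only noise survivors.

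First I would invoke the sub-Gaussian tail bound implied by \cref{ass:tra-subg}. Applying the standard Chernoff optimization to the moment-generating-function bound $\mathbb{E}[\exp(t\vs_{ij})]\le\exp(\sigma^2 t^2/(2d))$ — minimizing $\exp(-t\tau_i+\sigma^2 t^2/(2d))$ over $t>0$, with optimum at $t^\star=d\tau_i/\sigma^2$ — yields $\mathbb{P}(\vs_{ij}>\tau_i)\le\exp\!\big(-d\tau_i^2/(2\sigma^2)\big)$.

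Next I would substitute the definition of $\tau_i$ from \eqref{eq:tau}. Since $\tau_i^2=2\beta^2\log\!\big((i+1)/\kappa\big)/d$, the exponent simplifies to $d\tau_i^2/(2\sigma^2)=(\beta^2/\sigma^2)\log\!\big((i+1)/\kappa\big)$, hence $\mathbb{P}(\vs_{ij}>\tau_i)\le\big(\kappa/(i+1)\big)^{\beta^2/\sigma^2}$. Summing over the at most $i$ noise keys gives $\mathbb{E}[S_i]\le i\,\big(\kappa/(i+1)\big)^{\beta^2/\sigma^2}$. With the threshold calibrated to the noise scale — i.e. $\beta=\sigma$, which is precisely the extreme-value scaling $\tau_i\asymp\sqrt{2\log i/d}$ flagged after \eqref{eq:tau} — this collapses to $\mathbb{E}[S_i]\le i\kappa/(i+1)<\kappa$, giving the claim; for $\beta\ge\sigma$ in the intended small-$\kappa$ regime the bound is only sharper.

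The main obstacle — really the only delicate point — is the bookkeeping of constants in this last step: one must (i) keep the $i{+}1$ (rather than $i$) inside the logarithm in $\tau_i$ so that the union over the $\le i$ noise keys cancels cleanly against the polynomial tail, and (ii) make explicit the calibration between the learnable scalar $\beta$ and the sub-Gaussian proxy $\sigma$ that the statement tacitly assumes, since for arbitrarily small $\beta$ the conclusion can fail. Everything else — linearity of expectation, the Chernoff bound, and the algebraic substitution — is routine.
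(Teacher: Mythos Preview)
Your proposal is correct and follows essentially the same argument as the paper: linearity of expectation, the sub-Gaussian Chernoff tail $\mathbb{P}(\vs_{ij}>\tau_i)\le\exp(-d\tau_i^2/(2\sigma^2))$, substitution of $\tau_i$, and the union over at most $i$ noise keys to obtain $\mathbb{E}[S_i]\le \kappa\cdot i/(i+1)\le\kappa$. You are also right to flag the implicit calibration $\beta=\sigma$ (the paper's restated theorem in the appendix indeed sets $\tau_i=\sigma\sqrt{2\log((i+1)/\kappa)/d}$) and the observation that no independence across keys is needed---both of these match the paper's own remarks.
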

Under the sub-Gaussian noise model, 
a length-dependent threshold with $\beta \ge \sigma$ keeps the expected number of spurious survivors per query row bounded as context length grows (Proof in \cref{ssec:theorem43_proof}) and prevents noise accumulation from dominating long-context aggregation. Hence, TRA is non-dispersive (Proof in \cref{ssec:theorem44_proof}).

\begin{figure*}[t]
    \centering
    \begin{subfigure}[t]{0.3\textwidth}
        \centering
        \includegraphics[width=\linewidth]{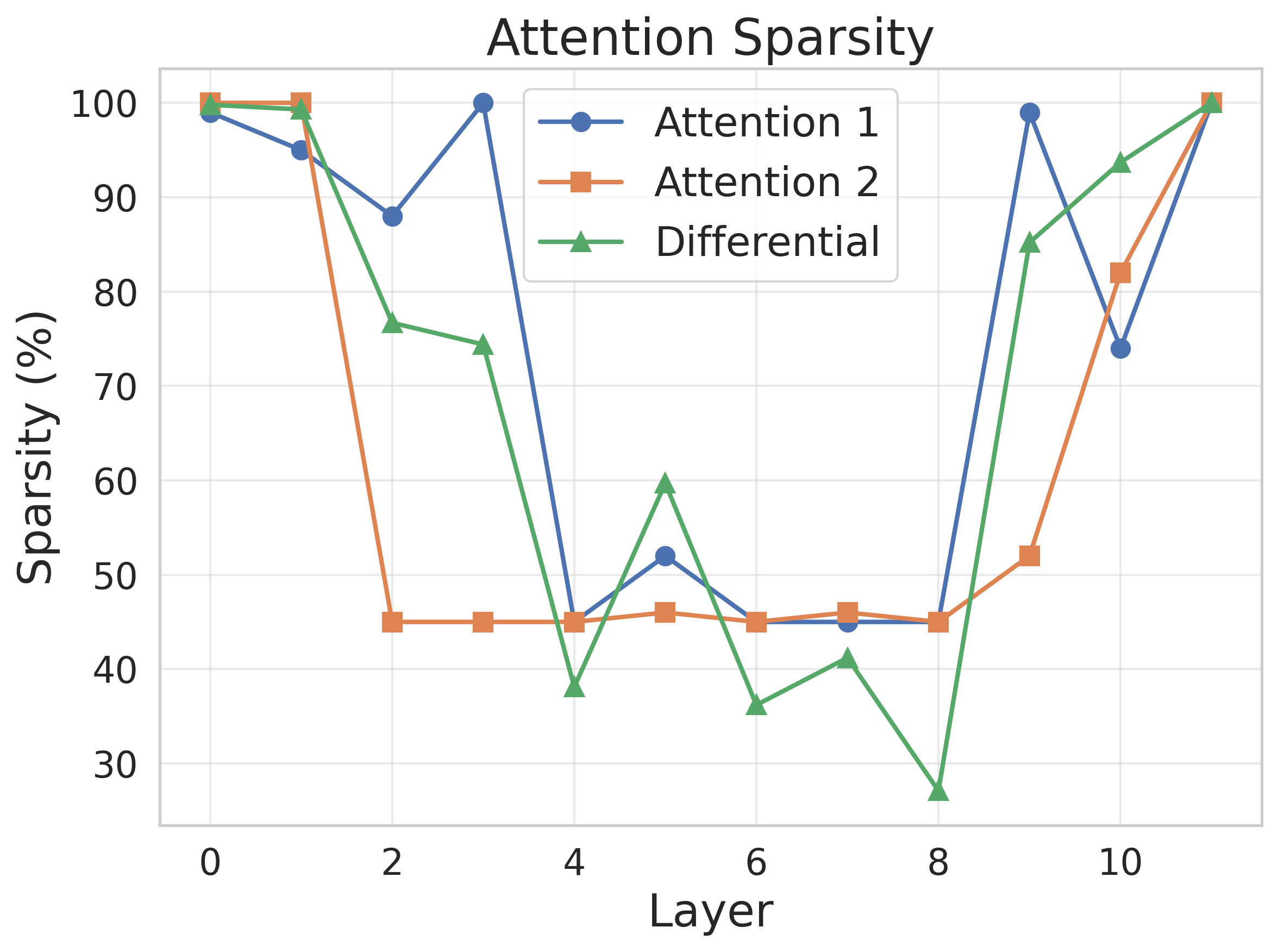}
        \caption{\textbf{Attention Sparsity.} }
        \label{fig:tda_sparsity_curve}
    \end{subfigure}
    \hfill
    \begin{subfigure}[t]{0.3\textwidth}
        \centering
        \includegraphics[width=\linewidth]{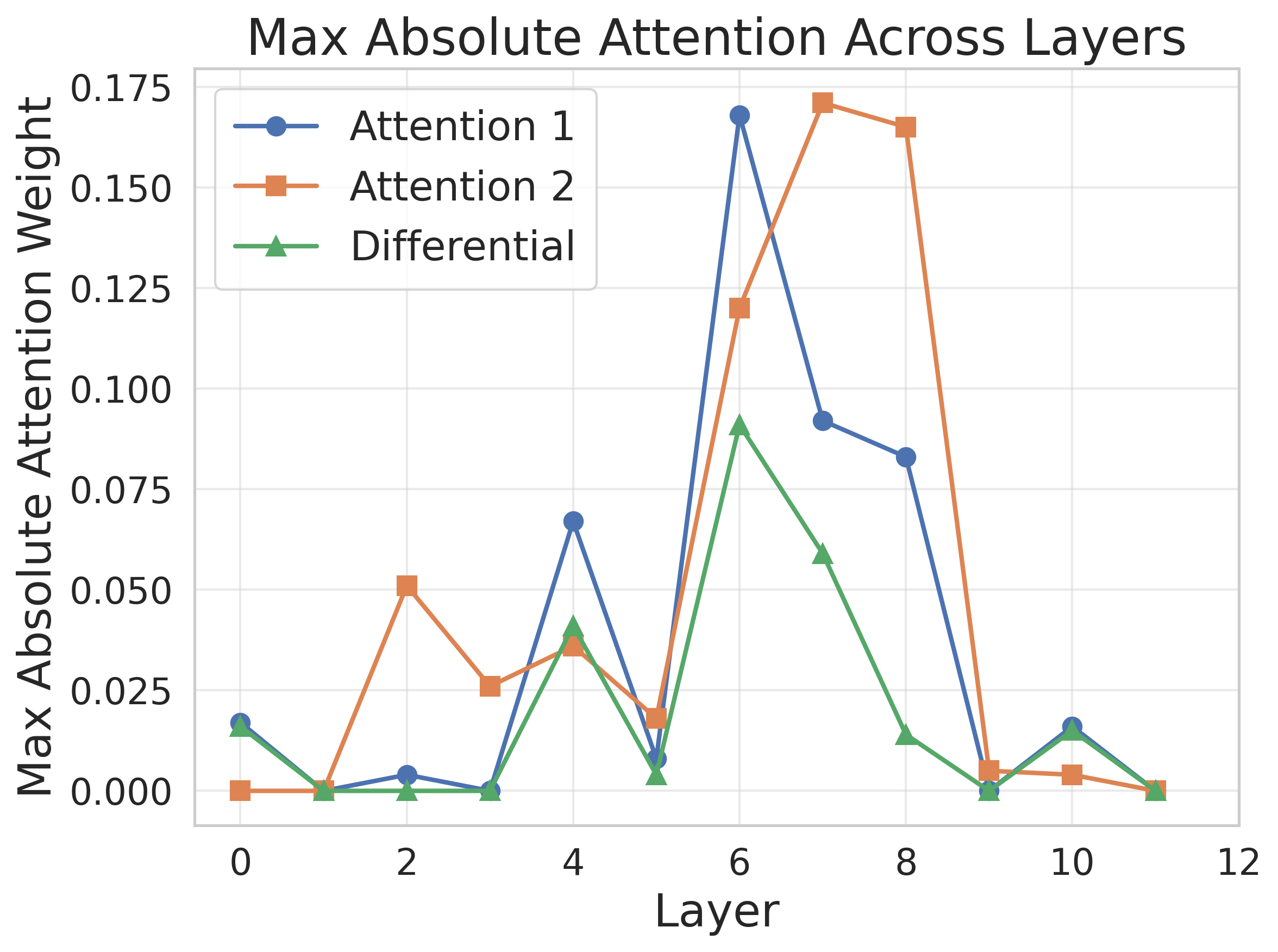}
        \caption{\textbf{Max Absolute Attention.} }
        \label{fig:tda_max_curve}
    \end{subfigure}
    \hfill
    \begin{subfigure}[t]{0.38\textwidth}
        \centering
        \includegraphics[width=\linewidth]{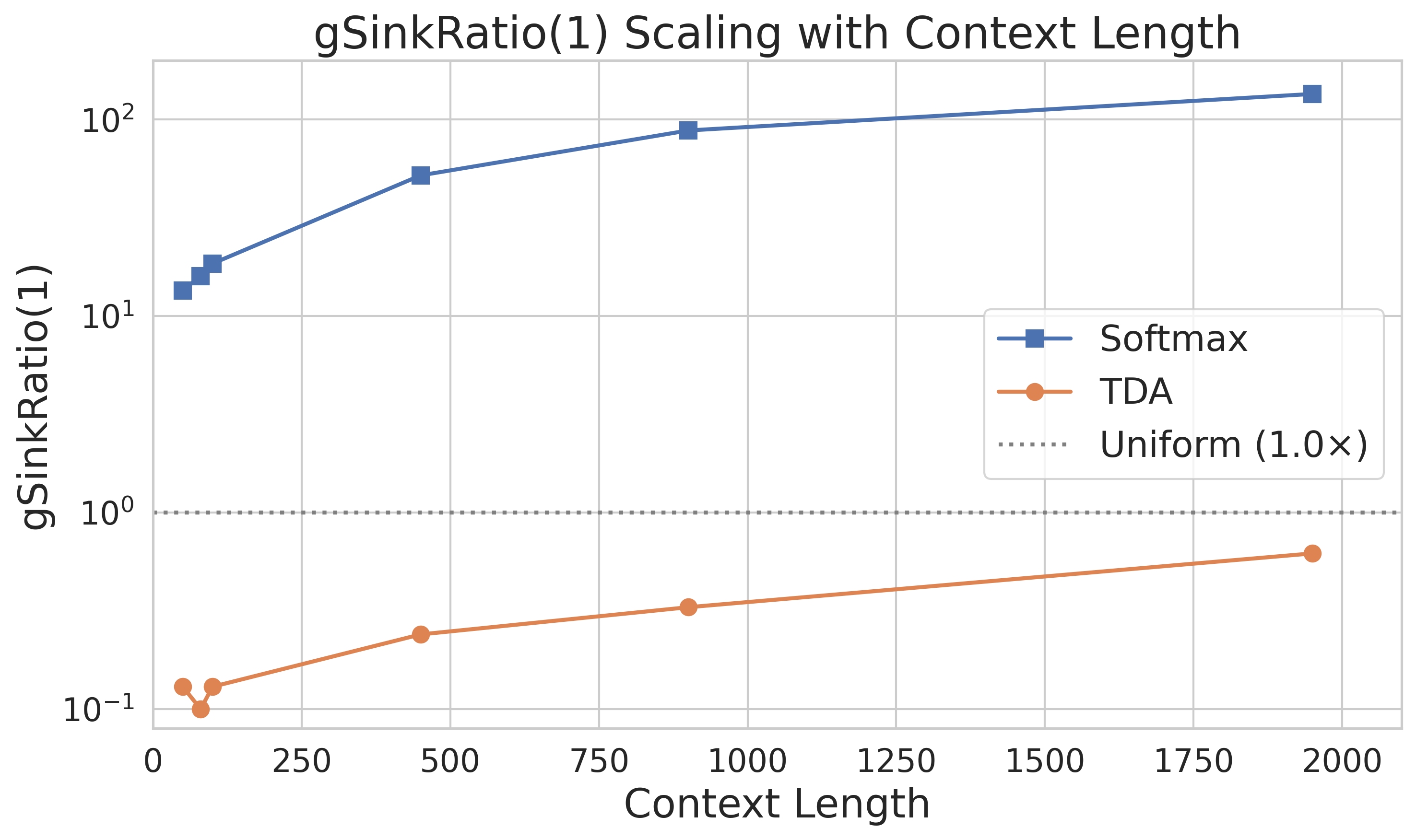}
        \caption{\textbf{Attention Sink Scaling.} }
        \label{fig:tda_sink_scale}
    \end{subfigure}
    
    \caption{\textbf{Mechanistic diagnostics for TDA.} We visualize (a) the sparsity of attention weights across layers, (b) the maximum absolute attention values, and (c) the first token attention sink ratio \(\mathrm{gSinkRatio}(1)\) as context length increases.  Attention 1 indicates an excitatory view, while Attention 2 indicates an inhibitory view. }
    \vspace{-0.5em}
    \label{fig:tda_diagnostics}
\end{figure*}

\begin{theorem}[TRA is Non-Dispersive]
\label{thm:tra-non-dispersion}
Under Assumptions~\ref{ass:tra-subg} and \ref{ass:bounded-relevant}, the Threshold Rectified Attention (TRA) is non-dispersive.
\end{theorem}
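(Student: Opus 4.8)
The plan is to control the effective entropy $H(\va_i)$ purely through the size of the support of $\va_i$ --- the number of keys that survive the threshold --- and then invoke \cref{thm:tra-survivors} and \cref{ass:bounded-relevant} to show this support is $O(1)$ in expectation. Since the Shannon entropy of any finitely supported (sub)probability vector is at most the logarithm of its support size up to an absolute constant, an $O(1)$ expected support forces $\mathbb{E}[H(\va_i)]/\log i \to 0$, which is exactly non-dispersion.

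I would carry this out in four steps. \emph{Step 1 (support decomposition).} Because $\va_{ij} = (\vs_{ij}-\tau_i)_+^{\,p}$ is nonzero exactly when $\vs_{ij}>\tau_i$ (and vanishes for masked $j>i$), the support of $\va_i$ is $\{\,1\le j\le i : \vs_{ij}>\tau_i\,\}$. Partitioning the visible keys $\gS(i)$ into relevant keys and noise keys $\gN(i)$, the support size is $N_i = R_i + S_i$, where $R_i := \sum_{j\in\gS(i)\setminus\gN(i)}\1(\vs_{ij}>\tau_i)$ and $S_i$ is the spurious-survivor count of \cref{thm:tra-survivors}. By \cref{ass:bounded-relevant}, $R_i\le r$ almost surely, so $N_i \le r + S_i$. \emph{Step 2 (entropy $\le$ log-support).} Let $Q_i := \|\va_i\|_1/(\|\va_i\|_1+\epsilon)\in[0,1]$, and on the event that the support is nonempty write $\hat p_{ij} = Q_i\,\rho_{ij}$, where $\rho_i$ is a genuine probability vector on $\{j:\va_{ij}>0\}$. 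Expanding the logarithm gives $H(\va_i) = -Q_i\log Q_i + Q_i\,H(\rho_i) \le 1/e + \log\max(N_i,1)$, using $-x\log x\le 1/e$ on $[0,1]$, $0\le H(\rho_i)\le\log N_i$, and the convention $H(\va_i)=0$ when $\va_i=\vzero$. \emph{Step 3 (expectation and Jensen).} Since $S_i$ is a nonnegative integer, $\max(r+S_i,1)\le\max(r,1)+S_i$, so by concavity of $\log$ and \cref{thm:tra-survivors},
\[
\mathbb{E}[H(\va_i)] \;\le\; 1/e + \mathbb{E}\!\left[\log\!\bigl(\max(r,1)+S_i\bigr)\right] \;\le\; 1/e + \log\!\bigl(\max(r,1)+\mathbb{E}[S_i]\bigr) \;\le\; 1/e + \log\!\bigl(\max(r,1)+\kappa\bigr).
\]
\emph{Step 4 (conclusion).} The right-hand side is a finite constant $C(r,\kappa)$ independent of $i$; since also $H(\va_i)\ge 0$, we get $0 \le \mathbb{E}[H(\va_i)]/\log i \le C(r,\kappa)/\log i \to 0$ as $i\to\infty$. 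Hence $\lim_{i\to\infty}\mathbb{E}[H(\va_i)]/\log i = 0$, i.e., TRA is non-dispersive.

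The main (and essentially only) obstacle is Step 2: the paper defines the effective entropy with an additive $\epsilon$ in the $\ell_1$ normalization, so the $\hat p_{ij}$ sum to $Q_i<1$ rather than $1$ and the textbook bound ``entropy $\le\log(\text{support size})$'' does not apply verbatim. The factorization $\hat p_{ij}=Q_i\rho_{ij}$ repairs this at the cost of the harmless additive $1/e$; taking $\epsilon\to 0$ (or adopting the proper-distribution convention) removes it entirely. A minor bookkeeping point is the empty-support event $\{\va_i=\vzero\}$, where every $\hat p_{ij}=0$ and $H(\va_i)=0$ under $0\log 0 = 0$; the $\max(\cdot,1)$ inside the logarithm absorbs this case. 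Everything else is a direct application of \cref{thm:tra-survivors}, \cref{ass:bounded-relevant}, and concavity of the logarithm, requiring no new tail estimates.
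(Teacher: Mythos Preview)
Your proof is correct and follows essentially the same route as the paper's: bound the effective entropy by the logarithm of the support size, decompose the support into relevant plus spurious survivors, invoke \cref{ass:bounded-relevant} and \cref{thm:tra-survivors} to get an $O(1)$ expected support, apply Jensen, and divide by $\log i$. The only substantive difference is that you explicitly handle the additive $\epsilon$ in the $\ell_1$ normalization via the factorization $\hat p_{ij}=Q_i\rho_{ij}$, picking up the harmless $1/e$ term; the paper instead writes $H(\va_i)\le\log(1+S_i)$ directly, which is the $\epsilon\to 0$ version of your bound.
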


\subsection{Threshold Differential Attention (TDA)}
\label{sec:tda}

While TRA bounds the \emph{number} of spurious exceedances per row, a single view thresholded map can still admit occasional high-magnitude noise. 
To suppress these, we incorporate the key idea of \emph{differential attention}
\citep{ye2025diff,kong2025dex}: we compute an \emph{excitatory} thresholded view and subtract an \emph{inhibitory} thresholded view. 
Intuitively, a large similarity measure $\vq^\top \vk$ can also occur spuriously due to shared, uninformative structure. In TDA, the inhibitory TRA is trained to capture such non-selective exceedances.

\paragraph{Definition.}
We employ two independent sets of projections $\{\vq^{(t)}, \vk^{(t)}\}_{t\in\{1,2\}}$. We normalize queries and keys in each view and compute scores $\vs_{ij}^{(t)} := \langle \tilde \vq_i^{(t)}, \tilde \vk_j^{(t)}\rangle$. Using the same length-dependent threshold $\tau_i$ as in TRA (\cref{eq:tau}), we compute the differential weights:
\begin{align}
\va_{ij}^{(t)} &:= (\vs_{ij}^{(t)}-\tau_i)_+^{p}, \qquad t\in \{1,2\}, \nonumber\\
\Delta \va_{ij} &:= \va_{ij}^{(1)} - \lambda\va_{ij}^{(2)} \qquad \lambda\in(0,1), \nonumber\\
\vo_i &:= \mathrm{Norm}\left(\sum_{j=1}^{i} \Delta \va_{ij}\vv_j\right).\nonumber
\end{align}
Here $\lambda$ is a learned scalar controling inhibition strength. Similar to \citet{ye2025diff}, $\Delta \va_{ij}$ can be negative, yielding a signed attention signal.

\paragraph{Properties.}
We define \emph{consensus spurious survivors} $C_i$ as the number of noise keys exceeding the threshold in \emph{both} views simultaneously.
\[
C_i := \sum_{j\in\gN(i)} \1\!\left(\vs_{ij}^{(1)}>\tau_i\right)\,\1\!\left(\vs_{ij}^{(2)}>\tau_i\right)
\]

\begin{assumption}[Independence of noise views\footnote{We assume independent noise for tractability; with positive dependence, joint exceedances may increase, and TDA then relies more on differential cancellation than filtration.}]
\label{ass:tda-indep}
Fix a query position $i$. For any noise key $j\in\gN(i)$, the similarities
$\vs_{ij}^{(1)}$ and $\vs_{ij}^{(2)}$ are independent.
\end{assumption}

\begin{theorem}[Consensus spurious survivors vanish]
\label{thm:tda-consensus}
Under Assumption~\ref{ass:tra-subg} and Assumption~\ref{ass:tda-indep} for both views (with the same $\sigma$), for all $i\ge 1$,
\(
\mathbb{E}[C_i] \;\le\; \frac{\kappa^2}{i+1}.
\)
Thus, $\lim_{i\to\infty} \mathbb{E}[C_i] = 0$.
\end{theorem}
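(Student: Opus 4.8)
The plan is to bound $\mathbb{E}[C_i]$ by linearity of expectation over the noise keys, reducing to a per-key probability of a \emph{joint} exceedance in both views, and then to exploit the independence assumption (\cref{ass:tda-indep}) to factor this joint probability into a product of two single-view tail probabilities. Each single-view tail probability is then controlled exactly as in the proof of \cref{thm:tra-survivors}: a sub-Gaussian tail bound at the level $\tau_i$.

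First I would write $\mathbb{E}[C_i] = \sum_{j\in\gN(i)} \mathbb{P}\!\left(\vs_{ij}^{(1)}>\tau_i \ \text{and}\ \vs_{ij}^{(2)}>\tau_i\right)$ by linearity of expectation. By \cref{ass:tda-indep}, for each fixed $j$ the events $\{\vs_{ij}^{(1)}>\tau_i\}$ and $\{\vs_{ij}^{(2)}>\tau_i\}$ are independent, so the joint probability factors as $\mathbb{P}(\vs_{ij}^{(1)}>\tau_i)\cdot\mathbb{P}(\vs_{ij}^{(2)}>\tau_i)$. Next, for each view $t\in\{1,2\}$, \cref{ass:tra-subg} gives the sub-Gaussian tail bound $\mathbb{P}(\vs_{ij}^{(t)}>\tau_i)\le \exp\!\big(-d\tau_i^2/(2\sigma^2)\big)$. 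Plugging in $\tau_i = \beta\sqrt{2\log((i+1)/\kappa)/d}$ with $\beta\ge\sigma$, this is at most $\exp(-\log((i+1)/\kappa)) = \kappa/(i+1)$ — exactly the per-key survival bound underlying \cref{thm:tra-survivors}. Squaring, the joint per-key probability is at most $\kappa^2/(i+1)^2$.

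Finally, since $|\gN(i)|\le i < i+1$, summing over $j\in\gN(i)$ gives
\[
\mathbb{E}[C_i] \le |\gN(i)|\cdot\frac{\kappa^2}{(i+1)^2} \le \frac{\kappa^2}{i+1},
\]
which tends to $0$ as $i\to\infty$.

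The only genuine subtlety — and where I would be careful — is the interplay between the per-key bound $\kappa/(i+1)$ and the cardinality factor. In the single-view theorem the $i$ noise keys and the $1/i$-type per-key probability combine to give the constant $\kappa$; here the \emph{square} of the per-key probability beats the linear cardinality factor, producing the decay. I should make sure the sub-Gaussian bound is applied with the same $\sigma$ in both views (as the hypothesis stipulates) and that the normalization $\|\tilde\vq_i^{(t)}\|_2 = \|\tilde\vk_j^{(t)}\|_2 = 1$ is what licenses the $\sigma^2/d$ variance proxy in each view — both are already assumed, so no extra work is needed. No step requires more than the tail bound plus independence; the "hard part" is really just bookkeeping the exponents correctly so that $2\log((i+1)/\kappa)$ in $\tau_i^2$ cancels against the $2\sigma^2$ (via $\beta\ge\sigma$) to leave a clean $(i+1)^{-1}$ per view.
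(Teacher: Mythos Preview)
Your proposal is correct and matches the paper's proof essentially line for line: linearity of expectation over $j\in\gN(i)$, factorization of the joint exceedance via \cref{ass:tda-indep}, the sub-Gaussian tail bound $\exp(-d\tau_i^2/(2\sigma^2))$ per view, and the final cardinality bound $|\gN(i)|\le i+1$ to obtain $\kappa^2/(i+1)$. The only cosmetic difference is that the paper's restated version fixes $\beta=\sigma$ whereas you carry $\beta\ge\sigma$; your version is the (harmless) slightly more general one.
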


This formalizes the benefit of TDA: while each view may admit $O(1)$ spurious exceedances, 
the probability of them overlapping on the same noise token vanishes as context length increases (Proof in \Cref{ssec:theorem45_proof}). 
Additionally, as a linear combination of two TRA views, TDA naturally inherits the non-dispersive property (Proof in \Cref{ssec:corollary47_proof}):

\begin{corollary}[TDA is Non-Dispersive]
\label{cor:tda-non-dispersion}
Under Assumptions~\ref{ass:tra-subg}, \ref{ass:bounded-relevant}, and \ref{ass:tda-indep} for both views (with same $\sigma$), then TDA is non-dispersive.
\end{corollary}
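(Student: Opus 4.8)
The plan is to reduce the claim to the support-size control already established for TRA. The effective entropy in the definition of dispersion is built from the $\ell_1$-normalized \emph{absolute} weights $\hat p_{ij}$, whose support is contained in the nonzero coordinates of the attention-weight vector $\Delta\va_i = \va_i^{(1)} - \lambda\va_i^{(2)}$. Since $\hat p_i$ is an $\epsilon$-smoothed sub-probability vector on at most $\|\Delta\va_i\|_0$ coordinates, one has $H(\Delta\va_i)\le \log\!\big(\max\{\|\Delta\va_i\|_0,1\}\big)+c$ for an absolute constant $c$ (the additive $c$ absorbing the $\epsilon$-regularization, exactly as in the proof of \Cref{thm:tra-non-dispersion}). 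Hence it suffices to bound $\mathbb{E}[\|\Delta\va_i\|_0]$ by a constant independent of $i$ and invoke concavity of $\log$.

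The key steps, in order, are: (i) the deterministic inclusion $\mathrm{supp}(\Delta\va_i)\subseteq \mathrm{supp}(\va_i^{(1)})\cup\mathrm{supp}(\va_i^{(2)})$ (strict only when some coordinates cancel exactly, which only helps), so $\|\Delta\va_i\|_0\le \|\va_i^{(1)}\|_0+\|\va_i^{(2)}\|_0$; (ii) for each view $t\in\{1,2\}$, a coordinate $j\le i$ is nonzero only if $\vs_{ij}^{(t)}>\tau_i$, and splitting $\gS(i)$ into relevant keys and noise keys $\gN(i)$, \Cref{ass:bounded-relevant} bounds the relevant survivors by $r$ while the noise survivors number exactly $S_i^{(t)}$; (iii) applying \Cref{thm:tra-survivors} to each view (under \Cref{ass:tra-subg} with the common $\sigma$) gives $\mathbb{E}[S_i^{(t)}]\le\kappa$, hence $\mathbb{E}[\|\va_i^{(t)}\|_0]\le r+\kappa$ and $\mathbb{E}[\|\Delta\va_i\|_0]\le 2(r+\kappa)$ for all $i\ge 1$; (iv) by Jensen, $\mathbb{E}[H(\Delta\va_i)]\le \log\!\big(1+\mathbb{E}[\|\Delta\va_i\|_0]\big)+c\le \log\!\big(1+2(r+\kappa)\big)+c$, a constant in $i$; dividing by $\log i$ and letting $i\to\infty$ gives the ratio $\to 0$, i.e.\ non-dispersion.

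There is no substantial obstacle here — the result is a bookkeeping corollary of \Cref{thm:tra-survivors} and \Cref{thm:tra-non-dispersion} — but two points need care. First, the support-counting bound on the effective entropy must be valid for \emph{signed} weights and for the $\epsilon$-smoothed vector; this is exactly why the definition works through $|\va_{ij}|$, and the argument mirrors the TRA case. Second, the per-view support bound must be genuinely uniform in $i$: this is precisely the content of \Cref{thm:tra-survivors}, whose constant $\kappa$ does not depend on $i$. Note that \Cref{ass:tda-indep} is not actually used in this argument (it is inherited from the hypotheses of \Cref{thm:tda-consensus} and listed only for consistency); the non-dispersive property follows from filtering alone, view by view.
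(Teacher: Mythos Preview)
Your proposal is correct and follows essentially the same route as the paper: bound the support of $\Delta\va_i$ by the union of the two per-view supports, control each via \Cref{ass:bounded-relevant} plus \Cref{thm:tra-survivors} to get $\mathbb{E}[\|\Delta\va_i\|_0]\le 2(r+\kappa)$, then apply Jensen to the entropy-vs-support bound and divide by $\log i$. Your side remark that \Cref{ass:tda-indep} is never actually invoked is also accurate---the paper lists it among the hypotheses but its proof, like yours, uses only per-view filtering.
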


We provide additional empirical diagnostics for all the assumptions in \Cref{app:theory-diagnostics}.

\section{Mechanistic Analysis of TDA} 

We begin by mechanistically diagnosing the behavior of TDA to illustrate its theoretical properties. For these analyses, we use a modified GPT-2 architecture~\citep{radford2019language} where the standard absolute positional embeddings are replaced with Rotary Positional Embeddings (RoPE) \citep{su2021roformer} and the Softmax attention is replaced by TDA. 
We use the sentence \emph{``The quick brown fox jumps over the lazy dog''} as input to visualize internal attention dynamics and inhibition patterns.

\paragraph{Attention Sparsity.}

\begin{figure*}[t]
    \centering
    \includegraphics[width=\linewidth]{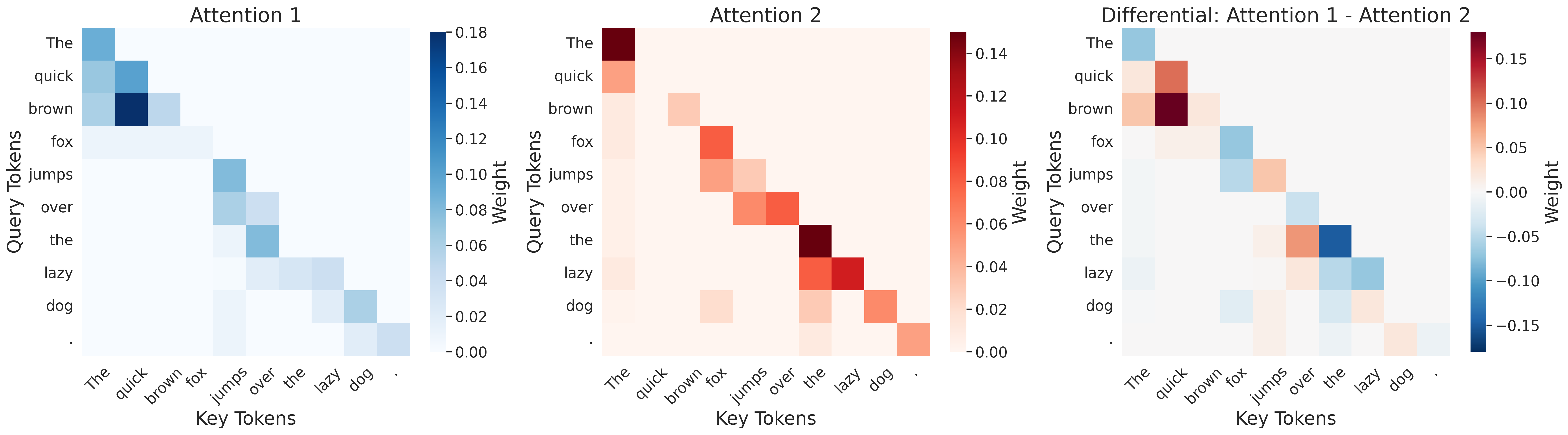}
    \caption{\textbf{Inhibition in TDA.}
    Per-token attention maps for a representative head (Layer 6, Head 0) on the sequence
    \emph{``The quick brown fox jumps over the lazy dog''}.
    The differential attention maps show \(\Delta \mA =  \mA^{(1)} - \lambda  \mA^{(2)}\) where $\lambda=1$; negative values indicate \emph{inhibition}, while positive values indicate \emph{excitation}.}
    \vspace{-1em}
    \label{fig:tda_sparse_heatmap}
\end{figure*}

A defining feature of TDA is that it produces intrinsically sparse attention maps without explicit top-$k$ truncation.
To quantify this, we mark an attention entry as \emph{inactive} if its magnitude is exact 0, and report sparsity as the fraction of inactive entries per layer
(\Cref{fig:tda_sparsity_curve}).
We observe a depth-dependent profile: early and late layers are highly sparse, whereas middle layers are
substantially more active (near-zero rate drops to $\sim$50\%).
This ``active core'' is consistent with representations in intermediate layers, producing stronger and query--key alignments, so a larger fraction of pairs exceed the row-wise threshold and contribute
to value aggregation; outside this region, most interactions are gated off, and further cancel 
differential subtraction, yielding near-all-zero maps.

\paragraph{Attention Sink.}
\Cref{fig:tda_max_curve} shows the layerwise maximum absolute attention weight in each view and in the effective differential map $\Delta\mA=\mA^{(1)}-\lambda\mA^{(2)}$.
Across layers, the differential map exhibits substantially smaller peaks than either individual view, indicating that inhibition cancels large common-mode exceedances and prevents any single interaction from dominating the aggregation. 

This bounded-peak behavior directly translates to robustness against attention sinks. As shown in \Cref{fig:tda_sink_scale}, under Softmax, the first-token sink ratio $\mathrm{gSinkRatio}(1)$ increases sharply as the sequence grows, reflecting the well-known tendency for early positions to become globally attractive “sinks.” In contrast, TDA maintains a sink ratio near (or below) the uniform baseline as context length increases, confirming that differential inhibition effectively prevents the formation of attention sinks.

\paragraph{Inhibition Behavior.}

The differential panel in \Cref{fig:tda_sparse_heatmap} visualizes the signed weights
\(\Delta \va_{ij} := \va_{ij}^{(1)} - \lambda\,\va_{ij}^{(2)}\).
Negative values correspond to \emph{inhibition}: key positions whose contribution is actively subtracted in the value aggregation.
In the example head, the high-frequency preposition \emph{``the''} is broadly inhibited across many queries, consistent with suppressing globally attractive but semantically weak keys.
By contrast, content tokens such as \emph{``quick''} and \emph{``brown''} show \emph{query-dependent} inhibition: they contribute only for the subset of queries where they are useful, and are inhibited elsewhere.
Overall, the inhibition behavior enables fine-grained suppression of redundant keys while preserving selective interactions.

\section{Experiments}
\label{sec:experiment}

We pretrain all models on the FineWebEdu-10B dataset from scratch, a high-quality educational subset of the FineWeb dataset \citep{penedo2024fineweb}. The dataset consists of 10B tokens, and we reserve the first 100M tokens for validation. We train a variant of GPT-2-162M~\citep{radford2019language} model, replacing the learnable positional encoding with RoPE~\citep{su2021roformer}. Throughout the experiments, we set $\kappa=1,\beta=1$, and $p=2$ for TRA and TDA.
When extending models to longer context lengths, we employ NTK-aware scaling for RoPE-based models \citep{peng2024yarn,ntk-aware-scaling}, and further train them on the pretraining dataset for 500 additional steps. We conduct all inference using \texttt{lm-evaluation-harness} library~\citep{eval-harness}.
All experiments were conducted on 8 NVIDIA A100-80GB GPUs. See further experimental details in \cref{app:exp}.

\begin{table*}[t]
\centering
\setlength{\tabcolsep}{2pt}
\resizebox{0.99\textwidth}{!}{
\begin{tabular}{l c cc cc cc cc cc c c}
\toprule
\multirow{2}{*}{\textbf{Method}} &
\multirow{2}{*}{\textbf{Val.\ Loss} $\downarrow$} &
\multicolumn{2}{c}{\textbf{HellaSwag}} &
\multicolumn{2}{c}{\textbf{ARC-Easy}} &
\multicolumn{2}{c}{\textbf{ARC-Challenge}} &
\multicolumn{2}{c}{\textbf{OpenBookQA}} &
\multicolumn{2}{c}{\textbf{PIQA}} &
\multicolumn{1}{c}{\textbf{Winogrande}} &
\multirow{2}{*}{\textbf{Sparsity} $\uparrow$} \\
\cmidrule(lr){3-4}
\cmidrule(lr){5-6}
\cmidrule(lr){7-8}
\cmidrule(lr){9-10}
\cmidrule(lr){11-12}
\cmidrule(lr){13-13}
& & Acc & Acc-Norm & Acc & Acc-Norm & Acc & Acc-Norm & Acc & Acc-Norm & Acc & Acc-Norm & Acc & \\
\midrule

Softmax                & \underline{3.1196} & \textbf{0.345} & 0.409 & \textbf{0.526} & 0.487 & 0.223 & 0.245 & 0.180 & 0.304 & \underline{0.641} & 0.621 & 0.490 & 0\% \\
Gated Softmax          & 3.1489 & 0.330 & 0.382 & 0.474 & 0.436 & 0.194 & 0.224 & 0.162 & 0.284 & 0.620 & 0.586 & 0.500 & 0\% \\
SSMax       & 3.1369 & 0.324 & 0.387 & 0.472 & 0.462 & 0.191 & 0.231 & 0.144 & 0.302 & 0.620 & 0.590 & 0.508 & 0\% \\
\midrule

Entmax                 & 3.1941 & \underline{0.342} & 0.391 & 0.508 & 0.472 & 0.194 & 0.245 & \underline{0.198} & 0.304 & 0.632 & 0.609 & \underline{0.523} & 43\% \\
LSSA                  & 3.1676 & 0.330 & 0.378 & 0.521 & 0.470 & 0.217 & \textbf{0.259} & 0.192 & 0.314 & 0.652 & 0.621 & \textbf{0.532} & 0\% \\
ReLA                   & 3.1657 & 0.329 & 0.394 & 0.512 & 0.468 & \textbf{0.226} & 0.250 & 0.194 & 0.306 & 0.634 & 0.621 & 0.509 & 94\% \\
\midrule

Diff Softmax   & 3.1941 & 0.336 & \textbf{0.423} & 0.509 & \textbf{0.496} & 0.225 & \underline{0.252} & 0.178 & \underline{0.316} & \textbf{0.648} & 0.619 & 0.514 & 0\% \\
Dex                    & 3.1349 & 0.339 & 0.395 & 0.492 & 0.466 & 0.215 & 0.241 & 0.172 & 0.282 & 0.640 & 0.608 & 0.519 & 0\% \\
Diff ReLA      & 3.1294 & 0.331 & 0.391 & 0.514 & 0.472 & 0.220 & 0.248 & 0.192 & 0.298 & 0.636 & \underline{0.623} & 0.494 & \underline{96\%} \\
\midrule
\midrule
TRA ($p{=}2,\beta{=}1$) & 3.1320 & 0.330 & 0.401 & 0.516 & 0.471 & \textbf{0.226} & 0.247 & 0.194 & 0.278 & 0.637 & \textbf{0.626} & 0.496 & 92\% \\
TDA ($p{=}2,\beta{=}1$) & \textbf{3.1190} & 0.337 & \underline{0.415} & \underline{0.524} & \underline{0.488} & 0.220 & 0.239 & \textbf{0.216} & \textbf{0.320} & 0.628 & \textbf{0.626} & 0.489 & \textbf{99\%} \\
\bottomrule
\end{tabular}}
\caption{\textbf{Language modeling results.} We report validation loss and both accuracy (Acc) and length-normalized accuracy (Acc-Norm). Winogrande reports Acc only. \textbf{Sparsity} (see \cref{app:sparsity_def}) is the fraction of attention weights that are exactly $0$, averaged over layers, heads, and instances. We \textbf{bold} the first and \underline{underscore} the second place.}
\label{tab:lmeval_acc_norm_loss_sparsity}
\end{table*}

\subsection{Language Modeling}

\paragraph{Setup.} For language modelling, we report the zero-shot performance (accuracy and length-normalized accuracy) on a standard suite of multiple-choice commonsense and science QA benchmarks:
HellaSwag~\citep{zellers-etal-2019-hellaswag},
ARC-Easy/Challenge~\citep{clark-etal-2018-arc},
OpenBookQA~\citep{openbookqa},
PIQA~\citep{bisk-etal-2020-piqa},
and Winogrande~\citep{winogrande}.
We compare against (i) \textit{Softmax} attention~\citep{vaswani2017attention} and two length/activation variants: \textit{Gated Softmax}~\citep{qiu2025gatedattention} and \textit{Scalable Softmax (SSMax)}~\citep{nakanishi2025scalable};
(ii) non-softmax baselines including \textit{Entmax}~\citep{martins2016sparsemax,peters2019sparse}, \textit{LSSA}~\citep{gao2025softplus}, and \textit{ReLA}~\citep{zhang2021rela};
and (iii) differential baselines \textit{Differential Softmax}~\citep{ye2025diff} and \textit{Dex}~\citep{kong2025dex}.
Finally, we include \textit{Differential ReLA} as a straightforward combination of ReLA with a differential construction, and our proposed \textit{TRA} and \textit{TDA}.
For a fair comparison, we retrained all baselines and our proposed methods under the same experimental configurations.

\paragraph{Results.} The results on zero-shot common sense reasoning are summarized in \Cref{tab:lmeval_acc_norm_loss_sparsity}. TDA achieves the lowest validation loss and 99\% sparsity while maintaining competitive accuracy across all tasks. Standard Softmax remains a reliable baseline. While variants like Gated and Scalable Softmax aim to improve length robustness, they do not surpass the baseline in the standard context regime.

Among non-softmax methods, unconstrained ReLA suffers from performance degradation due to noise accumulation. Our single-view TRA significantly closes this gap, validating that length-dependent thresholding effectively manages the noise floor. Entmax also shows that sparsity helps with reasoning, achieving competitive accuracy. LSSA performs impressively on ARC-Challenge and Winogrande but remains fully dense, foregoing the efficiency benefits of exact zeros.

Finally, Differential Softmax is the strongest baseline, validating the utility of inhibitory noise cancellation, yet it remains fully dense. Dex attempts efficient correction but trails in accuracy without achieving sparsity, while Differential ReLA gains sparsity at the cost of performance. TDA bridges this gap, matching the high accuracy of Differential Softmax while achieving extreme sparsity by actively filtering noise via thresholding rather than just suppressing it.

\subsection{Long-Context Language Modeling}
\label{sec:long-context}

\begin{table}[t]
\centering
\setlength{\tabcolsep}{4pt}
\resizebox{0.48\textwidth}{!}{
\begin{tabular}{l cccc}
\toprule
\textbf{Method} &
\textbf{QMSum}&
\textbf{SummScreenFD} &
\textbf{GovReport} &
\textbf{Qasper} \\
\midrule

Softmax                & 10.29 &  7.25 & 3.78 &  8.82 \\
SSMax       & 11.22 &  8.47 & 2.68 &  9.70 \\
Diff Softmax   & 10.57 &  8.08 & 3.08 & 11.23 \\
Entmax                 & \textbf{11.52} & \textbf{10.16} & 4.24 & \textbf{11.54} \\
ReLA                   & 11.20 &  9.14 & 4.42 & 10.77 \\
\midrule
TRA ($p{=}2,\beta{=}1$) & 11.18 &  \underline{9.47} & \textbf{5.61} & 11.09 \\
TDA ($p{=}2,\beta{=}1$) & \underline{11.46} &  9.13 & \underline{5.24} & \underline{11.41} \\
\bottomrule
\end{tabular}}
\caption{\textbf{Long-context evaluation on SCROLLS.} We report ROUGE-1 for QMSum, SummScreen, and GovReport, and F1 for Qasper. We \textbf{bold} the first and \underline{underscore} the second place.}

\vspace{-0.35em}
\label{tab:scrolls_long_context}
\end{table}

\paragraph{Setup.}
We evaluate long-context generalization on four tasks from the SCROLLS benchmark~\citep{shaham2022scrolls}:
QMSum, SummScreenFD, GovReport, and Qasper.
We report ROUGE-1 for QMSum, SummScreenFD, and GovReport, and F1 for Qasper, following SCROLLS.
We compare Softmax and representative alternatives: Scalable Softmax, Differential Softmax, Entmax, and ReLA, against our proposed TRA and TDA.

\paragraph{Results.}
\Cref{tab:scrolls_long_context} shows that our methods are consistently competitive: TDA is the second-best method on QMSum and Qasper, while TRA achieves the best GovReport score and ranks second on SummScreenFD.
While Entmax achieves the best overall long-context performance across the four SCROLLS tasks, it is known to be substantially more expensive at long context.
Overall, both TRA and TDA provide strong long-context performance while avoiding projection-based sparse attention overheads.

\subsection{Passkey Retrieval Test}
\label{sec:passkey_retrieval}
\paragraph{Setup.}
We evaluate long-context retrieval using the \emph{passkey retrieval} stress test~\citep{mohtashami2023randomaccess,kamradt2023needle}, in which a short numeric key is inserted at a random position inside a long span of irrelevant text, and the model is later queried to retrieve the key.
We test target context lengths from 500 to 4000 tokens in increments of 500, and run 100 trials with independently sampled passkeys and random insertion locations for each length. See \Cref{app:passkey} for details.

\begin{figure}[t]
    \centering
    \includegraphics[width=\linewidth]{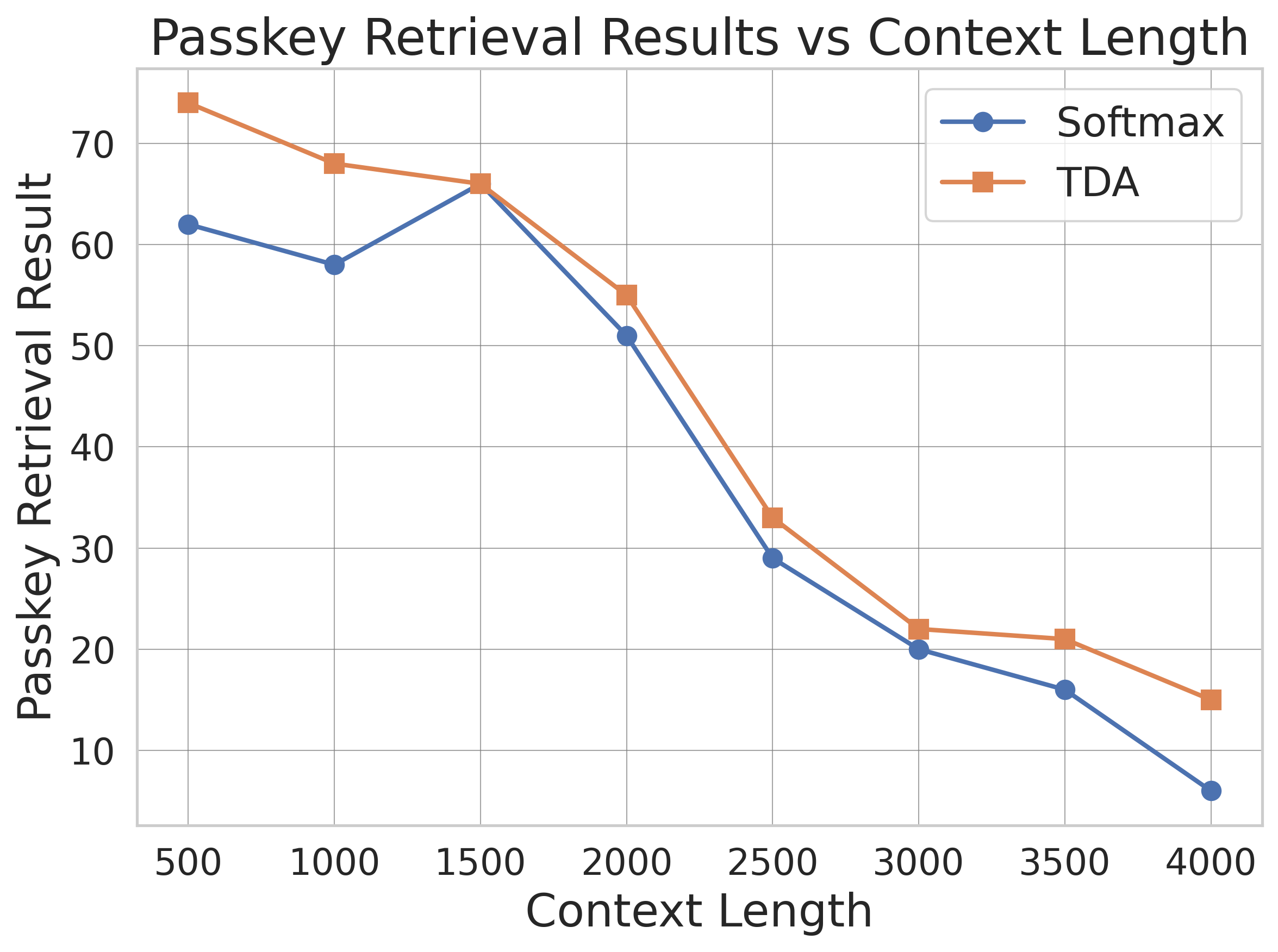}
    \caption{\textbf{Passkey retrieval results.}
    We report correct runs over 100 trials for each context length, with randomly positioned passkey per trial.
    }
    \label{fig:passkey}
\end{figure}

\paragraph{Results.} \Cref{fig:passkey} reports the number of successful trials (out of 100) as a function of context length.
Both Softmax and TDA degrade as the amount of irrelevant text grows, but TDA is consistently more robust across nearly all lengths.
Specifically, TDA consistently yields higher retrieval accuracy than Softmax in shorter contexts, and this advantage persists at long contexts: at 4000 tokens, TDA achieves 15\% correct vs.\ 6\% for Softmax.
Overall, these results suggest that TDA's length-aware thresholding and inhibitory view better suppress chance matches from irrelevant tokens, showing improved retrieval under heavy long-context noise.

\subsection{Multi-needle Retrieval Test} 

\paragraph{Setup.} To further test whether extreme sparsity harms retrieval from multiple distant locations, we also evaluate a \emph{multi-needle} variant in which the model must retrieve several key-value pairs placed at widely separated positions in the context.

\begin{table}[t]
    \centering
    \begin{tabular}{@{}lcccc@{}}
        \toprule
        \multirow{2}{*}{\textbf{Context}} & \multicolumn{2}{c}{\textbf{2 Needles}} & \multicolumn{2}{c}{\textbf{4 Needles}} \\
        \cmidrule(lr){2-3} \cmidrule(lr){4-5}
         & \textbf{Softmax} & \textbf{TDA} & \textbf{Softmax} & \textbf{TDA} \\
        \midrule
        500  & 48.0 & \textbf{72.0} & 18.0 & \textbf{62.0} \\
        1000 & 40.0 & \textbf{64.0} & 8.0  & \textbf{28.0} \\
        2000 & 4.0  & \textbf{70.0} & 0.0  & \textbf{2.0}  \\
        4000 & 0.0  & \textbf{82.0} & 0.0  & \textbf{20.0}\\
        \bottomrule
    \end{tabular}
    \caption{\textbf{Multi-needle passkey retrieval.} Avg accuracy of 
    multiple key-value pairs which are inserted at widely separated positions in the context.}
    \label{tab:multi_needle_passkey}
\end{table}

\paragraph{Results.} \Cref{tab:multi_needle_passkey} shows that TDA consistently outperforms Softmax in both the 2-needle and 4-needle settings, with the advantage becoming especially pronounced at longer contexts.
These results suggest that TDA’s sparsity does not eliminate useful global interactions; instead, by filtering irrelevant matches, it preserves the ability to retrieve multiple distant pieces of information under heavy long-context noise.

\subsection{Practical Efficiency}
\label{sec:practical-efficiency}

\begin{table*}[t]
\centering
\begin{tabular}{@{}lcccccccc@{}}
\toprule
\textbf{Context Length} & \textbf{512} & \textbf{1024} & \textbf{2048} & \textbf{4096} & \textbf{8192} & \textbf{16384} & \textbf{32768} & \textbf{65536} \\
\midrule
FlashAttn-2 & 0.108 & \textbf{0.124} & \textbf{0.213} & \textbf{0.568} & 2.009 & 7.781 & 31.713 & 135.616 \\
TRA         & \textbf{0.099} & 0.140 & 0.224 & 0.577 & \textbf{1.773} & \textbf{6.240} & \textbf{24.641} & \textbf{109.803} \\
\midrule
Speedup     & 1.09× & 0.89× & 0.95× & 0.98× & 1.13× & 1.25× & 1.29× & 1.24× \\
\bottomrule
\end{tabular}
\caption{\textbf{BF16 latency (ms) and speedup vs.\ FlashAttention-2.}}
\label{tab:bf16}
\end{table*}

\paragraph{Setup.} A practical advantage of the ultra-sparsity induced by TDA and TRA is the potential reduction in value-aggregation cost. We therefore evaluate the runtime of the proposed TRA kernel under BF16, which is the standard precision for modern LLM training and inference. \Cref{tab:bf16} compares TRA against FlashAttention-2~\citep{dao2023flashattention2} using identical tensor layouts, hardware, and CUDA-event median timing across sequence lengths up to 65k tokens. Additional implementation details and FP32 benchmarks are provided in \Cref{app:Triton_Kernel}.

\paragraph{Results.} TRA is slightly slower at very short sequence lengths (\(\leq\)2k) due to fixed kernel overhead, becomes competitive around 4k, and achieves consistent speedups at longer contexts (\(\geq\)8k), reaching up to \(1.29\times\) at 32k tokens. These gains arise from the sparsity induced by thresholding, which reduces value aggregation cost and becomes more pronounced as sequence length increases.

For completeness, we also report additional FP32 benchmarks comparing our fused Triton implementation against naive PyTorch baselines and fused SDPA operator in \Cref{app:Triton_Kernel}.

\subsection{Hyperparameter Study}

\begin{table}[t]
\centering
\small
\setlength{\tabcolsep}{8pt}

\begin{subtable}[t]{\linewidth}
\centering
\begin{tabular}{c c c}
\toprule
\textbf{$p$} &
\textbf{Val.\ Loss} $\downarrow$ &
\textbf{Avg Accuracy} $\uparrow$ \\
\midrule
1 & 3.2068 & 0.3945 \\
2 & \textbf{3.1190} & \textbf{0.4023} \\
3 & 3.1408 & 0.4020 \\
5 & 3.1412 & 0.3922 \\
\bottomrule
\end{tabular}
\caption{\textbf{Different power $p$} (fix $\beta=1$).}
\label{tab:ablate_k}
\end{subtable}

\begin{subtable}[t]{\linewidth}
\centering
\begin{tabular}{c c c}
\toprule
\textbf{$\beta$} &
\textbf{Val.\ Loss} $\downarrow$ &
\textbf{Avg Accuracy} $\uparrow$ \\
\midrule
1.0 & 3.1190 & \textbf{0.4023} \\
0.8 & \textbf{3.1140} & 0.4015 \\
0.5 & 3.1288 & 0.4018 \\
\bottomrule
\end{tabular}
\caption{\textbf{Different threshold scaling} \textbf{$\beta$}  (fix $p=2$).}
\label{tab:ablate_beta}
\end{subtable}

\caption{\textbf{TDA hyperparameter study.}
Avg Accuracy is averaged over HellaSwag, ARC-Easy, ARC-Challenge, OpenBookQA, PIQA, and Winogrande.}
\label{tab:ablation_tda}
\end{table}

\paragraph{Effect on Power $p$.}
\Cref{tab:ablate_k} studies the power $p$ applied in TRA $(\vs_{ij}-\tau_i)_+^{\,p}$.
We find that a mild nonlinearity is important: $p{=}2$ yields the best validation loss and average accuracy, while $p{=}1$ is noticeably worse, likely because it removes the nonlinearity and thus reduces expressive power, which is consistent with prior observations~\citep{gao2025softplus}.
Larger powers ($p\ge 3$) also slightly degrade performance, likely because aggressively amplifying increases gradient variance

\paragraph{Effect on Threshold Scaling Parameter $\beta$.}
\Cref{tab:ablate_beta} varies the threshold scale $\beta$, which controls the selectivity of the length-aware gate $\tau_i$: smaller $\beta$ admits more noise, whereas larger $\beta$ can over-prune and increase the risk of near-empty rows before the model adapts. Empirically, performance is fairly robust across the tested range, with $\beta{=}1.0$ achieving the best validation loss and the highest average accuracy, while $\beta{=}0.5$ remains close in accuracy but is slightly worse in loss.

\section{Conclusion}
We introduced \emph{Threshold Differential Attention} (TDA), a drop-in non-softmax attention mechanism for 
language modeling that addresses the structural pathologies of attention sinks and dispersion.
TDA combines a length-aware extreme-value threshold with a differential view, yielding signed, ultra-sparse, and sink-free attention.
Theoretically, we prove that spurious survivors per row remains bounded by $O(1)$ and that consensus spurious exceedances across views vanish as context grows. 
Empirically, TDA achieves $>99\%$ exact-zero sparsity while maintaining competitive performance. Furthermore, we provide a fused Triton kernel that translates this into significant runtime and memory gains.
In summary, TDA provides a practical path toward long-context Transformers by improving robustness against dispersion and sinks.


\section*{Limitations}
We note that due to hardware constraints, the evaluation in this work is primarily at a small scale - scaling TDA to larger models remains an important future work. While we observe consistent behavior across these settings, it remains to be validated whether the same sparsity patterns, training stability, and efficiency gains hold at larger scales (e.g., multi-billion parameter models).

Additionally, overly aggressive thresholding can also cause \emph{dead heads} - heads with no survivors. While head inactivity is not unique to TDA and may in some cases function as an explicit idle state, in the extreme, widespread dead heads can effectively disable multi-head attention in certain layers, reducing the model's expressive capacity for long-range information routing and making performance more sensitive to threshold hyperparameters.
Future work should investigate layer-/head-wise adaptive threshold schedules that preserve ultra-sparsity while preventing head collapse, especially in initial and late layers.

\bibliography{custom}

@inproceedings{martins2016sparsemax,
  title={From Softmax to Sparsemax: A Sparse Model of Attention and Multi-Label Classification},
  author={Martins, Andr{\'e} F. T. and Fern{\'a}ndez Astudillo, Ram{\'o}n},
  booktitle={ICML},
  year={2016}
}

@article{asentmax2025,
  title={Long-Context Generalization with Sparse Attention},
  author={Vasylenko, Pavlo and Pitorro, Hugo and Martins, Andr{\'e} F. T. and Treviso, Marcos},
  journal={arXiv preprint arXiv:2506.16640},
  year={2025}
}

@inproceedings{peters2019sparse,
  title={Sparse Sequence-to-Sequence Models},
  author={Peters, Ben and Niculae, Vlad and Martins, Andr{\'e} F. T.},
  booktitle={ACL},
  year={2019}
}

@article{zuhri2025softpick,
  title={Softpick: No Attention Sink, No Massive Activations with Rectified Softmax},
  author={Zuhri, Zayd M. K. and Fuadi, Erland Hilman and Aji, Alham Fikri},
  journal={arXiv preprint arXiv:2504.20966},
  year={2025}
}

@article{boufadene2025sliced,
  title={Sliced {ReLU} attention: Quasi-linear contextual expressivity via sorting},
  author={Boufad{\`e}ne, Siwan and Vialard, Fran{\c{c}}ois-Xavier},
  journal={arXiv preprint arXiv:2512.11411},
  year={2025}
}

@inproceedings{barbero2025firsttoken,
  title={Why do LLMs attend to the first token?},
  author={Barbero, Federico and Arroyo, {\'A}lvaro and Gu, Xiangming and Perivolaropoulos, Christos and Bronstein, Michael and Veli{\v{c}}kovi{\'c}, Petar and Pascanu, Razvan},
  booktitle={COLM},
  year={2025}
}

@inproceedings{gu2025emerges,
  title={When Attention Sink Emerges in Language Models: An Empirical View},
  author={Gu, Xiangming and Pang, Tianyu and Du, Chao and Liu, Qian and Zhang, Fengzhuo and Du, Cunxiao and Wang, Ye and Lin, Min},
  booktitle={ICLR},
  year={2025}
}

@inproceedings{ye2025diff,
  title={Differential Transformer},
  author={Ye, Tianzhu and Dong, Li and Xia, Yuqing and Sun, Yutao and Zhu, Yi and Huang, Gao and Wei, Furu},
  booktitle={ICLR},
  year={2025}
}

@article{nakanishi2025scalable,
  title={Scalable-Softmax Is Superior for Attention},
  author={Nakanishi, Ken M.},
  journal={arXiv preprint arXiv:2501.19399},
  year={2025}
}

@inproceedings{zhang2021rela,
  title={Sparse Attention with Linear Units},
  author={Zhang, Biao and Titov, Ivan and Sennrich, Rico},
  booktitle={EMNLP},
  year={2021}
}

@inproceedings{kong2025dex,
  title={Understanding Differential Transformer Unchains Pretrained Self-Attentions},
  author={Kong, Chaerin and Jang, Jiho and Kwak, Nojun},
  booktitle={NeurIPS},
  year={2025}
}

@article{chen2025criticalattention,
  title={Critical attention scaling in long-context transformers},
  author={Chen, Shi and Lin, Zhengjiang and Polyanskiy, Yury and Rigollet, Philippe},
  journal={arXiv preprint arXiv:2510.05554},
  year={2025}
}

@inproceedings{vaswani2017attention,
  title     = {Attention Is All You Need},
  author    = {Vaswani, Ashish and Shazeer, Noam and Parmar, Niki and Uszkoreit, Jakob and Jones, Llion
              and Gomez, Aidan N. and Kaiser, Lukasz and Polosukhin, Illia},
  booktitle = {NeurIPS},
  year      = {2017}
}

@article{ba2016layernorm,
  title   = {Layer Normalization},
  author  = {Ba, Jimmy Lei and Kiros, Jamie Ryan and Hinton, Geoffrey E.},
  journal = {arXiv preprint arXiv:1607.06450},
  year    = {2016}
}

@inproceedings{he2016residual,
  title     = {Deep Residual Learning for Image Recognition},
  author    = {He, Kaiming and Zhang, Xiangyu and Ren, Shaoqing and Sun, Jian},
  booktitle = {CVPR},
  year      = {2016}
}

@article{xiao2023streamingllm,
  title   = {Efficient Streaming Language Models with Attention Sinks},
  author  = {Xiao, Guangxuan and Tian, Yuandong and Chen, Beidi and Han, Song and Lewis, Mike},
  journal = {arXiv preprint arXiv:2309.17453},
  year    = {2023}
}

@inproceedings{devlin2019bert,
  title     = {BERT: Pre-training of Deep Bidirectional Transformers for Language Understanding},
  author    = {Devlin, Jacob and Chang, Ming-Wei and Lee, Kenton and Toutanova, Kristina},
  booktitle = {Proceedings of the 2019 Conference of the North American Chapter of the Association for Computational Linguistics: Human Language Technologies (NAACL-HLT)},
  year      = {2019}
}

@inproceedings{shaw2018selfattention,
  title     = {Self-Attention with Relative Position Representations},
  author    = {Shaw, Peter and Uszkoreit, Jakob and Vaswani, Ashish},
  booktitle = {Proceedings of the 2018 Conference of the North American Chapter of the Association for Computational Linguistics: Human Language Technologies (NAACL-HLT)},
  year      = {2018}
}

@inproceedings{su2021roformer,
  title     = {RoFormer: Enhanced Transformer with Rotary Position Embedding},
  author    = {Su, Jianlin and Lu, Yu and Pan, Shengfeng and Wen, Bo and Liu, Yunfeng},
  booktitle = {Proceedings of the 2021 Conference of the Association for Computational Linguistics (ACL)},
  year      = {2021}
}

@inproceedings{goncalves2025adasplash,
  title     = {{A}da{S}plash: Adaptive Sparse Flash Attention},
  author    = {Gon{\c{c}}alves, Nuno and Treviso, Marcos V. and Martins, Andre},
  booktitle = {ICML},
  year      = {2025},
}

@inproceedings{qiu2025gatedattention,
  title     = {Gated Attention for Large Language Models: Non-linearity, Sparsity, and Attention-Sink-Free},
  author    = {Qiu, Zihan and Wang, Zekun and Zheng, Bo and Huang, Zeyu and Wen, Kaiyue and Yang, Songlin and Men, Rui and Yu, Le and Huang, Fei and Huang, Suozhi and Liu, Dayiheng and Zhou, Jingren and Lin, Junyang},
  booktitle = {NeurIPS},
  year      = {2025},
}

@inproceedings{gao2025softplus,
  title   = {Softplus Attention with Re-weighting Boosts Length Extrapolation in Large Language Models},
  author  = {Gao, Bo and Spratling, Michael W.},
  booktitle = {AAAI},
  year    = {2025}
}

@inproceedings{velickovic2024softmax,
  title        = {softmax is not enough (for sharp out-of-distribution)},
  author       = {Veli{\v{c}}kovi{\'c}, Petar and Perivolaropoulos, Christos and Barbero, Federico and Pascanu, Razvan},
  booktitle      = {ICML},
  year         = {2025}
}

@article{shen2023reluformer,
  title   = {A Study on {ReLU} and Softmax in Transformer},
  author  = {Shen, Kai and Guo, Junliang and Tan, Xu and Tang, Siliang and Wang, Rui and Bian, Jiang},
  journal = {arXiv preprint arXiv:2302.06461},
  year    = {2023},
  doi     = {10.48550/arXiv.2302.06461}
}

@inproceedings{joulin2017efficient,
  title={Efficient softmax approximation for GPUs},
  author={Joulin, Armand and Ciss{\'e}, Moustapha and Grangier, David and J{\'e}gou, Herv{\'e} and others},
  booktitle={International conference on machine learning},
  pages={1302--1310},
  year={2017},
  organization={PMLR}
}

@article{achiam2023gpt,
  title={Gpt-4 technical report},
  author={Achiam, Josh and Adler, Steven and Agarwal, Sandhini and Ahmad, Lama and Akkaya, Ilge and Aleman, Florencia Leoni and Almeida, Diogo and Altenschmidt, Janko and Altman, Sam and Anadkat, Shyamal and others},
  journal={arXiv preprint arXiv:2303.08774},
  year={2023}
}

@article{touvron2023llama,
  title={Llama 2: Open foundation and fine-tuned chat models},
  author={Touvron, Hugo and Martin, Louis and Stone, Kevin and Albert, Peter and Almahairi, Amjad and Babaei, Yasmine and Bashlykov, Nikolay and Batra, Soumya and Bhargava, Prajjwal and Bhosale, Shruti and others},
  journal={arXiv preprint arXiv:2307.09288},
  year={2023}
}

@inproceedings{dao2022flashattention,
  title={Flash{A}ttention: Fast and Memory-Efficient Exact Attention with {IO}-Awareness},
  author={Dao, Tri and Fu, Daniel Y. and Ermon, Stefano and Rudra, Atri and R{\'e}, Christopher},
  booktitle={Advances in Neural Information Processing Systems},
  year={2022}
}

@inproceedings{dao2023flashattention2,
  title={Flash{A}ttention-2: Faster Attention with Better Parallelism and Work Partitioning},
  author={Dao, Tri},
  booktitle={The Twelfth International Conference on Learning Representations},
  year={2024}
}

@article{radford2019language,
  title={Language Models are Unsupervised Multitask Learners},
  author={Radford, Alec and Wu, Jeffrey and Child, Rewon and Luan, David and Amodei, Dario and Sutskever, Ilya},
  journal={OpenAI blog},
  volume={1},
  number={8},
  pages={9},
  year={2019}
}

@misc{ntk-aware-scaling,
  author       = {bloc97},
  title        = {NTK-Aware Scaled RoPE allows LLaMA models to have extended (8k+) context size without any fine-tuning and minimal perplexity degradation},
  howpublished = {Reddit},
  year         = {2023},
  month        = {June},
  day          = {29},
  url          = {https://www.reddit.com/r/LocalLLaMA/comments/14lz7j5/ntkaware_scaled_rope_allows_llama_models_to_have/},
  note         = {Accessed: 2026-01-04}
}

@inproceedings{peng2024yarn,
  title={YaRN: Efficient Context Window Extension of Large Language Models},
  author={Bowen Peng and Jeffrey Quesnelle and Honglu Fan and Enrico Shippole},
  booktitle={ICLR},
  year={2024},
}

@inproceedings{penedo2024fineweb,
  title     = {The FineWeb Datasets: Decanting the Web for the Finest Text Data at Scale},
  author    = {Penedo, Guilherme and Kydl{\'\i}{\v{c}}ek, Hynek and Lozhkov, Anton and Mitchell, Margaret and Raffel, Colin A. and Von Werra, Leandro and Wolf, Thomas and others},
  booktitle = {NeurIPS},
  year      = {2024}
}

@misc{eval-harness,
  author       = {Gao, Leo and Tow, Jonathan and Abbasi, Baber and Biderman, Stella and Black, Sid and DiPofi, Anthony and Foster, Charles and Golding, Laurence and Hsu, Jeffrey and Le Noac'h, Alain and Li, Haonan and McDonell, Kyle and Muennighoff, Niklas and Ociepa, Chris and Phang, Jason and Reynolds, Laria and Schoelkopf, Hailey and Skowron, Aviya and Sutawika, Lintang and Tang, Eric and Thite, Anish and Wang, Ben and Wang, Kevin and Zou, Andy},
  title        = {A framework for few-shot language model evaluation},
  month        = 12,
  year         = 2023,
  publisher    = {Zenodo},
  version      = {v0.4.0},
  doi          = {10.5281/zenodo.10256836},
  url          = {https://zenodo.org/records/10256836}
}

@article{beltagy2020longformer,
  title={Longformer: The long-document transformer},
  author={Beltagy, Iz and Peters, Matthew E and Cohan, Arman},
  journal={arXiv preprint arXiv:2004.05150},
  year={2020}
}

@inproceedings{zaheer2020big,
  title={Big bird: Transformers for longer sequences},
  author={Zaheer, Manzil and Guruganesh, Guru and Dubey, Kumar Avinava and Ainslie, Joshua and Alberti, Chris and Ontanon, Santiago and Pham, Philip and Ravula, Anirudh and Wang, Qifan and Yang, Li and others},
  booktitle={NeurIPS},
  year={2020}
}

@inproceedings{zellers-etal-2019-hellaswag,
  title={Hellaswag: Can a machine really finish your sentence?},
  author={Zellers, Rowan and Holtzman, Ari and Bisk, Yonatan and Farhadi, Ali and Choi, Yejin},
  booktitle={ACL},
  year={2019}
}

@article{clark-etal-2018-arc,
  title   = "Think you have Solved Question Answering? Try {ARC}, the {AI}2 Reasoning Challenge",
  author  = "Clark, Peter  and
             Cowhey, Isaac  and
             Etzioni, Oren  and
             Khot, Tushar  and
             Sabharwal, Ashish  and
             Schoenick, Carissa  and
             Tafjord, Oyvind",
  journal = "arXiv preprint arXiv:1803.05457",
  year    = "2018",
}

@inproceedings{openbookqa,
  title     = "Can a Suit of Armor Conduct Electricity? A New Dataset for Open Book Question Answering",
  author    = "Mihaylov, Todor  and
               Clark, Peter  and
               Khot, Tushar  and
               Sabharwal, Ashish",
  booktitle = "EMNLP",
  year      = "2018",
}

@inproceedings{bisk-etal-2020-piqa,
  title     = "{PIQA}: Reasoning about Physical Commonsense in Natural Language",
  author    = "Bisk, Yonatan  and
               Zellers, Rowan  and
               Le Bras, Ronan  and
               Gao, Jianfeng  and
               Choi, Yejin",
  booktitle = "AAAI",
  year      = "2020",
}

@inproceedings{winogrande,
  author       = {Keisuke Sakaguchi and
                  Ronan Le Bras and
                  Chandra Bhagavatula and
                  Yejin Choi},
  title        = {WinoGrande: An Adversarial Winograd Schema Challenge at Scale},
  booktitle    = {AAAI},
  year         = {2020},
}

@article{shaham2022scrolls,
  title={Scrolls: Standardized comparison over long language sequences},
  author={Shaham, Uri and Segal, Elad and Ivgi, Maor and Efrat, Avia and Yoran, Ori and Haviv, Adi and Gupta, Ankit and Xiong, Wenhan and Geva, Mor and Berant, Jonathan and others},
  journal={arXiv preprint arXiv:2201.03533},
  year={2022}
}

@misc{kamradt2023needle,
  author       = {Kamradt, Greg},
  title        = {LLMTest\_NeedleInAHaystack},
  howpublished = {\url{https://github.com/gkamradt/LLMTest_NeedleInAHaystack}},
  year         = {2023},
  note         = {GitHub repository, accessed 2026-01-02}
}

@inproceedings{mohtashami2023randomaccess,
  author    = {Mohtashami, Amirkeivan and Jaggi, Martin},
  title     = {Random-Access Infinite Context Length for Transformers},
  booktitle = {NeurIPS},
  volume    = {36},
  year      = {2023},
}

@book{vershynin2018high,
  title={High-dimensional probability: An introduction with applications in data science},
  author={Vershynin, Roman},
  volume={47},
  year={2018},
  publisher={Cambridge university press}
}

\appendix
\onecolumn

\section{Proofs}
\label[appendix]{app:proofs}

\begin{assumption}[Sub-Gaussian noise per row (restated Assumption~\ref{ass:tra-subg})]
\label{ass:tra-subg-restated}
Fix a query position $i$ and consider the visible set $\gS(i)=\{1,\dots,i\}$. Let the noise key lies in a subset $\gN \subseteq \gS$.
For any \emph{noise} key $j\in\gN(i)$, the similarity
$\vs_{ij}=\langle \tilde \vq_i,\tilde \vk_j\rangle$ is mean-zero and
$\sigma^2/d$-sub-Gaussian:
\[
\mathbb{E}\!\left[\exp(t \vs_{ij})\right] \le \exp\!\Big(\frac{\sigma^2}{2d}t^2\Big),
\qquad \forall t\in\mathbb{R}.
\]
\end{assumption}

\begin{assumption}[Bounded relevant survivors (restated Assumption~\ref{ass:bounded-relevant})]
\label{ass:bounded-relevant-restated}
Fix a query position $i$ with visible set $\gS(i)=\{1,\dots,i\}$.
Let $\gR(i):=\gS(i)\setminus \gN(i)$ denote the relevant (non-noise) keys.
Then the number of relevant keys that exceed the threshold is uniformly bounded:
\[
R_i := \sum_{j\in\gR(i)} \1(\vs_{ij}>\tau_i) \le r \quad \text{for all } i.
\]
\end{assumption}

\begin{assumption}[Two-view independence for noise (restated Assumption~\ref{ass:tda-indep})]
For a fixed query position $i$ and any noise key $j\in\gN(i)$, the similarities
$\vs_{ij}^{(1)}$ and $\vs_{ij}^{(2)}$ are independent.
\end{assumption}

\subsection{Proof of Theorem \ref{thm:tra-survivors}}
\label[appendix]{ssec:theorem43_proof}
\begin{theorem}[TRA keeps $O(1)$ spurious survivors per row (restated \Cref{thm:tra-survivors})]
\label{thm:tra-survivors-restated}
Assume Assumption~\ref{ass:tra-subg}. Define the number of \emph{spurious survivors} in row $i$ as
\[
S_i \;:=\; \sum_{j\in\gS(i)} \1\!\left(\vs_{ij}>\tau_i\right),
\]
(where the sum ranges over noise keys; for a worst-case statement, you may interpret all keys in $\gS(i)$ as noise).
If the row-wise threshold is
\[
\tau_i \;:=\; \sigma \sqrt{\frac{2\log\!\big(\frac{i+1}{\kappa}\big)}{d}},
\qquad \kappa>0,
\]
then for all $i\ge 1$, $\mathbb{E}[S_i] \le \kappa$.
More specifically, for $\tau_i=\beta \sqrt{\frac{2\log(i+1)}{d}}$ with $\beta>0$,
\[
\mathbb{E}[S_i] \;\le\; (i+1)^{\,1-\beta^2/\sigma^2}.
\]
\end{theorem}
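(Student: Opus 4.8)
The plan is to bound $\mathbb{E}[S_i]$ by a first-moment (linearity-of-expectation) argument combined with the standard sub-Gaussian Chernoff tail bound. First I would write
\[
\mathbb{E}[S_i] \;=\; \sum_{j\in\gN(i)} \mathbb{P}\!\left(\vs_{ij} > \tau_i\right),
\]
using linearity over the indicators $\1(\vs_{ij}>\tau_i)$, and note that $|\gN(i)| \le |\gS(i)| = i \le i+1$, so it suffices to bound each tail probability uniformly in $j$. This already shows the bound is a first-moment estimate, which is even tighter than the union bound one would use for $\mathbb{P}(S_i\ge 1)$.

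Next I would derive the one-sided tail bound from Assumption~\ref{ass:tra-subg}. For any $t>0$, Markov's inequality applied to the nonnegative variable $\exp(t\vs_{ij})$ gives $\mathbb{P}(\vs_{ij}>\tau_i) \le e^{-t\tau_i}\,\mathbb{E}[e^{t\vs_{ij}}] \le \exp\!\big(-t\tau_i + \tfrac{\sigma^2}{2d}t^2\big)$; choosing $t = d\tau_i/\sigma^2$ to minimize the exponent yields
\[
\mathbb{P}\!\left(\vs_{ij} > \tau_i\right) \;\le\; \exp\!\left(-\frac{d\,\tau_i^2}{2\sigma^2}\right).
\]
Then I would substitute the two threshold choices. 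For $\tau_i = \sigma\sqrt{2\log((i+1)/\kappa)/d}$ we have $d\tau_i^2/(2\sigma^2) = \log((i+1)/\kappa)$, so each term is at most $\kappa/(i+1)$, and summing over the at most $i+1$ noise keys gives $\mathbb{E}[S_i] \le \kappa$. For $\tau_i = \beta\sqrt{2\log(i+1)/d}$ we instead get $d\tau_i^2/(2\sigma^2) = (\beta^2/\sigma^2)\log(i+1)$, so each term is at most $(i+1)^{-\beta^2/\sigma^2}$, whence $\mathbb{E}[S_i] \le (i+1)^{1-\beta^2/\sigma^2}$; in particular $\beta\ge\sigma$ makes the exponent nonpositive and the bound $O(1)$.

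There is no real obstacle here: the argument is a textbook first-moment/Chernoff calculation. The only points worth flagging are (i) the harmless off-by-one between $|\gS(i)|=i$ and the $(i+1)$ appearing in $\tau_i$ and in the bound, which merely contributes a bit of slack; and (ii) that the sub-Gaussian MGF hypothesis is two-sided but only the upper tail is used, so no symmetry or lower-tail control is needed. If one wanted a high-probability rather than in-expectation statement, Markov's inequality on $S_i$ would convert $\mathbb{E}[S_i]\le\kappa$ into $\mathbb{P}(S_i\ge m)\le \kappa/m$, but that is beyond what the theorem asserts.
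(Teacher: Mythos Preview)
Your proposal is correct and matches the paper's proof essentially step for step: linearity of expectation to reduce to a single tail probability, the sub-Gaussian Chernoff bound $\mathbb{P}(\vs_{ij}>\tau_i)\le\exp(-d\tau_i^2/(2\sigma^2))$, and then substitution of the two threshold formulas. The only cosmetic difference is that the paper bounds the number of terms by $i$ rather than $i+1$ (yielding the slightly sharper intermediate $\kappa\cdot i/(i+1)\le\kappa$), and you additionally spell out the Chernoff optimization explicitly, which the paper simply cites as the sub-Gaussian tail.
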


\begin{proof}
Fix a query position $i$ and a noise key $j\in\gN(i)$.
By the sub-Gaussian tail implied by Assumption~\ref{ass:tra-subg}, for any $x\in\mathbb{R}$,
\[
\mathbb{P}(\vs_{ij}>x)\;\le\;\exp\!\Big(-\frac{d x^2}{2\sigma^2}\Big).
\]
By linearity of expectation (no independence across keys is needed),
\[
\mathbb{E}[S_i]
=
\sum_{j\in\gS(i)} \mathbb{P}(\vs_{ij}>\tau_i)
\;\le\;
i \cdot \exp\!\Big(-\frac{d \tau_i^2}{2\sigma^2}\Big).
\]
For $\tau_i = \sigma \sqrt{\frac{2\log((i+1)/\kappa)}{d}}$, we have
$\frac{d\tau_i^2}{2\sigma^2}=\log\!\big(\frac{i+1}{\kappa}\big)$, hence
\[
\mathbb{E}[S_i]
\le
i\exp\!\Big(-\log\!\Big(\frac{i+1}{\kappa}\Big)\Big)
=
\kappa\cdot \frac{i}{i+1}
\le
\kappa.
\]
For $\tau_i=\beta\sqrt{2\log(i+1)/d}$,
\[
\exp\!\Big(-\frac{d\tau_i^2}{2\sigma^2}\Big)
=
(i+1)^{-\beta^2/\sigma^2},
\]
so $\mathbb{E}[S_i]\le i(i+1)^{-\beta^2/\sigma^2}\le (i+1)^{1-\beta^2/\sigma^2}$.
\end{proof}

\subsection{Proof of Corollary \ref{thm:tra-non-dispersion}}
\label[appendix]{ssec:theorem44_proof}
\begin{theorem}[TRA is Non-Dispersive (restated \Cref{thm:tra-non-dispersion})]
\label{thm:tra-non-dispersion-restated}

Under Assumption~\ref{ass:tra-subg} and Assumption~\ref{ass:bounded-relevant},
the Threshold Rectified Attention (TRA) mechanism is non-dispersive.
\end{theorem}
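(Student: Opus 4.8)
The plan is to observe that TRA's weights are supported on only the \emph{survivors} of row $i$, whose count has $O(1)$ expectation, so the effective entropy is controlled by the logarithm of that count and hence stays bounded while $\log i\to\infty$. Fix a query position $i$. The nonzero entries of $\va_i$ are exactly the keys $j\le i$ with $\vs_{ij}>\tau_i$; split them into relevant survivors $R_i=\sum_{j\in\gR(i)}\1(\vs_{ij}>\tau_i)$ and spurious survivors $S_i=\sum_{j\in\gN(i)}\1(\vs_{ij}>\tau_i)$, and set $N_i:=R_i+S_i$ for the total number of survivors, i.e.\ the size of the support of $\hat p_i$.

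First I would prove a deterministic bound $H(\va_i)\le c_0+\log\!\big(\max(N_i,1)\big)$ for an absolute constant $c_0$. Since every $\va_{ij}\ge 0$, the vector $\hat p_i$ is nonnegative, supported on at most $N_i$ coordinates, with total mass $s:=\|\va_i\|_1/(\|\va_i\|_1+\epsilon)\in(0,1]$ (and $H(\va_i)=0$ when $N_i=0$, so the bound is trivial there). Writing $\hat p_{ij}=s\,r_{ij}$ with $\sum_j r_{ij}=1$ gives $H(\va_i)=-s\log s+s\,H(r_i)\le -s\log s+s\log N_i\le \tfrac1e+\log N_i$, using $-s\log s\le 1/e$ on $(0,1]$ and the fact that the Shannon entropy of a distribution on $N_i$ atoms is at most $\log N_i$. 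So $c_0=1/e$ works (the constant only absorbs the $\epsilon$-induced sub-normalization and is irrelevant asymptotically).

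Next I would take expectations and apply Jensen's inequality to the concave map $x\mapsto\log(1+x)$:
\[
\mathbb{E}[H(\va_i)]\;\le\; c_0+\mathbb{E}\!\big[\log(1+N_i)\big]\;\le\; c_0+\log\!\big(1+\mathbb{E}[N_i]\big).
\]
By Assumption~\ref{ass:bounded-relevant}, $R_i\le r$ surely; by Theorem~\ref{thm:tra-survivors} (applied with $\beta\ge\sigma$, so that $\mathbb{E}[S_i]\le\kappa$), $\mathbb{E}[S_i]\le\kappa$. Hence $\mathbb{E}[N_i]\le r+\kappa$, a constant independent of $i$, so $\mathbb{E}[H(\va_i)]\le c_0+\log(1+r+\kappa)=:C<\infty$ uniformly in $i$. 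Dividing by $\log i$ and letting $i\to\infty$ gives $\mathbb{E}[H(\va_i)]/\log i\le C/\log i\to 0$, which is precisely the non-dispersiveness criterion.

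The argument has no genuinely hard step; the only place needing care is the first one, where I must handle the normalization subtlety created by the $\epsilon$ in $\hat p_{ij}$ (making $\hat p_i$ a sub-probability vector) and the empty-support case $N_i=0$ — both dispatched by the elementary decomposition $\hat p_{ij}=s\,r_{ij}$. It is also worth flagging that Assumption~\ref{ass:bounded-relevant} is essential here: without a uniform cap on relevant survivors, $N_i$ could grow with $i$ and the entropy need not remain bounded; the sub-Gaussian control of Theorem~\ref{thm:tra-survivors} only tames the \emph{noise} contribution $S_i$.
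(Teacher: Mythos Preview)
Your proposal is correct and follows essentially the same route as the paper: bound $H(\va_i)$ by the log of the support size, use Assumption~\ref{ass:bounded-relevant} and Theorem~\ref{thm:tra-survivors} (with $\beta\ge\sigma$) to bound the expected number of survivors by $r+\kappa$, apply Jensen, and divide by $\log i$. Your treatment is in fact slightly more careful than the paper's, since you explicitly handle the $\epsilon$-induced sub-normalization via the decomposition $\hat p_{ij}=s\,r_{ij}$ and the bound $-s\log s\le 1/e$, whereas the paper simply asserts $H(\va_i)\le\log(1+S_i)$ without addressing that subtlety.
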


\begin{proof}
Let $S_i$ be the number of non-zero entries (survivors) in the TRA weight vector $\va_i$.
If $\va_i$ has support size $S_i$, then the effective entropy (defined on the $\ell_1$-normalized absolute weights)
is maximized by the uniform distribution on that support, hence
\[
H(\va_i) \le \log(1+S_i),
\]
where the $+1$ handles the case $S_i=0$.

Write $S_i = R_i + N_i$, where $R_i$ is the number of non-noise (relevant) survivors and $N_i$ is the number of noise survivors.
By Assumption~\ref{ass:bounded-relevant}, $R_i \le r$ for all $i$, and by Theorem~\ref{thm:tra-survivors}, if $\beta \ge \sigma$ then $\mathbb{E}[N_i] \le \kappa$.
Applying Jensen's inequality (since $x \mapsto \log x$ is concave),
\[
\mathbb{E}[H(\va_i)]
\le \mathbb{E}[\log(1+S_i)]
\le \log\!\big(1+\mathbb{E}[S_i]\big)
\le \log\!\big(1+r+\kappa\big).
\]
We now take the limit of the dispersion ratio as context length $i \to \infty$:
\[
\lim_{i \to \infty} \frac{\mathbb{E}[H(\va_i)]}{\log i}
\le
\lim_{i \to \infty} \frac{\log(1+r+\kappa)}{\log i}
= 0.
\]
Thus, TRA is non-dispersive.
\end{proof}

\subsection{Proof of Theorem \ref{thm:tda-consensus}}
\label[appendix]{ssec:theorem45_proof}
\begin{theorem}[Consensus spurious survivors vanish in TDA (restated \Cref{thm:tda-consensus})]
\label{thm:tda-consensus-restated}
Under Assumptions \ref{ass:tra-subg} and \ref{ass:tda-indep} for both views (with the same $\sigma$).
Set $\beta=\sigma$, and
\[
\tau_i := \beta \sqrt{\frac{2\log\!\big(\frac{i+1}{\kappa}\big)}{d}}, \qquad \kappa>0,
\]
and define the number of \emph{consensus} spurious survivors
\[
C_i := \sum_{j\in\gN(i)} \1\!\left(\vs_{ij}^{(1)}>\tau_i\right)\,\1\!\left(\vs_{ij}^{(2)}>\tau_i\right).
\]
Then for all $i\ge 1$, $\mathbb{E}[C_i]\le \kappa^2/(i+1)$, hence $\mathbb{E}[C_i]\to 0$ as $i\to\infty$.
\end{theorem}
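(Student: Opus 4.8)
The plan is to mimic the proof of Theorem~\ref{thm:tra-survivors} but exploit the independence of the two views to get a product of tail bounds, which converts the single-view bound $\kappa$ into $\kappa^2/(i+1)$.

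First I would fix a query position $i$ and a noise key $j\in\gN(i)$. By Assumption~\ref{ass:tra-subg} applied to each view separately, the sub-Gaussian tail bound gives $\mathbb{P}(\vs_{ij}^{(t)}>\tau_i)\le \exp(-d\tau_i^2/(2\sigma^2))$ for $t\in\{1,2\}$. Plugging in $\tau_i=\sigma\sqrt{2\log((i+1)/\kappa)/d}$ (i.e.\ $\beta=\sigma$) makes the exponent equal to $-\log((i+1)/\kappa)$, so each single-view exceedance probability is at most $\kappa/(i+1)$. Next, by Assumption~\ref{ass:tda-indep} the events $\{\vs_{ij}^{(1)}>\tau_i\}$ and $\{\vs_{ij}^{(2)}>\tau_i\}$ are independent, so the joint exceedance probability factorizes:
\[
\mathbb{P}\!\left(\vs_{ij}^{(1)}>\tau_i,\ \vs_{ij}^{(2)}>\tau_i\right)\le \left(\frac{\kappa}{i+1}\right)^{2}.
\]
Then I would apply linearity of expectation over the (at most $i$) noise keys in $\gN(i)$:
\[
\mathbb{E}[C_i]=\sum_{j\in\gN(i)}\mathbb{P}\!\left(\vs_{ij}^{(1)}>\tau_i,\ \vs_{ij}^{(2)}>\tau_i\right)\le i\cdot\frac{\kappa^2}{(i+1)^2}\le\frac{\kappa^2}{i+1}.
\]
Finally, taking $i\to\infty$ gives $\mathbb{E}[C_i]\to 0$.

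There is no real obstacle here — the argument is essentially the union-bound/first-moment computation of Theorem~\ref{thm:tra-survivors} squared, with independence doing the work in the one new line. The only point that needs a moment of care is the bookkeeping: in the single-view theorem the decaying factor was $\kappa/(i+1)$ (the exceedance probability for one key), and here that factor is squared while the number of terms is still $i$, so the sum contracts by roughly a factor $1/i$ rather than staying $O(1)$. I would also note that independence (Assumption~\ref{ass:tda-indep}) is exactly where the argument would break under positively correlated views, consistent with the footnote in the statement — but for the stated theorem nothing further is required.
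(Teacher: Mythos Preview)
Your proposal is correct and follows essentially the same approach as the paper: apply the sub-Gaussian tail bound to each view, use independence to square the exceedance probability, and sum over noise keys via linearity of expectation. The only cosmetic difference is that the paper keeps the bound in exponential form and substitutes $\tau_i$ at the end (bounding $|\gN(i)|\le i+1$), whereas you substitute first to get $\kappa/(i+1)$ per view and then bound $|\gN(i)|\le i$; both routes yield $\kappa^2/(i+1)$.
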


\begin{proof}
Fix a row $i$ and a noise key $j\in\gN(i)$.
By Assumption~\ref{ass:tra-subg}, each view is mean-zero and $\sigma^2/d$-sub-Gaussian, hence
\[
\mathbb{P}\!\left(\vs_{ij}^{(t)}>\tau_i\right)
\le
\exp\!\Big(-\frac{d\tau_i^2}{2\sigma^2}\Big),
\qquad t\in\{1,2\}.
\]
By Assumption~\ref{ass:tda-indep}, for a noise key $j$ we have independence across views, so
\[
\mathbb{P}\!\left(\vs_{ij}^{(1)}>\tau_i,\; \vs_{ij}^{(2)}>\tau_i\right)
=
\mathbb{P}\!\left(\vs_{ij}^{(1)}>\tau_i\right)\mathbb{P}\!\left(\vs_{ij}^{(2)}>\tau_i\right)
\le
\exp\!\Big(-\frac{d\tau_i^2}{\sigma^2}\Big).
\]
Taking expectation and using linearity,
\[
\mathbb{E}[C_i]
=
\sum_{j\in\gN(i)}
\mathbb{P}\!\left(\vs_{ij}^{(1)}>\tau_i,\; \vs_{ij}^{(2)}>\tau_i\right)
\le
|\gN(i)|\exp\!\Big(-\frac{d\tau_i^2}{\sigma^2}\Big)
\le
(i+1)\exp\!\Big(-\frac{d\tau_i^2}{\sigma^2}\Big).
\]
With $\tau_i=\sigma\sqrt{2\log((i+1)/\kappa)/d}$, we have
$\frac{d\tau_i^2}{\sigma^2}=2\log((i+1)/\kappa)$, hence
\[
\mathbb{E}[C_i]
\le
(i+1)\exp\!\Big(-2\log\!\Big(\frac{i+1}{\kappa}\Big)\Big)
=
(i+1)\Big(\frac{\kappa}{i+1}\Big)^2
=
\frac{\kappa^2}{i+1}.
\]
\end{proof}

\subsection{Proof of Theorem \ref{cor:tda-non-dispersion}}
\label[appendix]{ssec:corollary47_proof}
\begin{corollary}[TDA is Non-Dispersive (restated Corollary~\ref{cor:tda-non-dispersion})]
\label{cor:tda-non-dispersion-restated}
Under Assumption~\ref{ass:tra-subg}, Assumption~\ref{ass:bounded-relevant}, and Assumption~\ref{ass:tda-indep} for both views (with the same $\sigma$),
Threshold Differential Attention (TDA) is non-dispersive.
\end{corollary}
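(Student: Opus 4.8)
The plan is to deduce this from the non-dispersion result already proved for TRA (\Cref{thm:tra-non-dispersion}), using the fact that the TDA weight vector $\Delta\va_i$ is a linear combination of the two single-view TRA vectors $\va_i^{(1)}$ and $\va_i^{(2)}$. The key observation is that forming $\Delta\va_{ij}=\va_{ij}^{(1)}-\lambda\va_{ij}^{(2)}$ can never \emph{enlarge} the support: if both $\va_{ij}^{(1)}=0$ and $\va_{ij}^{(2)}=0$ then $\Delta\va_{ij}=0$, so $\operatorname{supp}(\Delta\va_i)\subseteq\operatorname{supp}(\va_i^{(1)})\cup\operatorname{supp}(\va_i^{(2)})$, and wherever $\va_{ij}^{(1)}=\lambda\va_{ij}^{(2)}$ an extra cancellation only shrinks it further. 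Hence the number of nonzero entries of $\Delta\va_i$ is at most $S_i^{(1)}+S_i^{(2)}$, where $S_i^{(t)}$ is the number of survivors in view $t$.

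From there the argument mirrors the proof of \Cref{thm:tra-non-dispersion}. First, since the effective entropy of any (sub-)probability vector supported on a set of size $m$ is at most $\log m$ — with the degenerate all-zero row giving entropy $0=\log 1$ — I get $H(\Delta\va_i)\le\log\!\big(1+S_i^{(1)}+S_i^{(2)}\big)$. Second, I decompose each per-view count as $S_i^{(t)}=R_i^{(t)}+N_i^{(t)}$ into relevant and noise survivors: \Cref{ass:bounded-relevant} bounds $R_i^{(t)}\le r$ uniformly in $i$, and \Cref{thm:tra-survivors} applied separately to each view — both views satisfy \Cref{ass:tra-subg} with the shared variance proxy $\sigma^2/d$ and share the identical threshold $\tau_i$ with $\beta\ge\sigma$ — gives $\mathbb{E}[N_i^{(t)}]\le\kappa$, whence $\mathbb{E}[S_i^{(1)}+S_i^{(2)}]\le 2(r+\kappa)$. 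Third, Jensen's inequality (concavity of $\log$) yields $\mathbb{E}[H(\Delta\va_i)]\le\log\!\big(1+2(r+\kappa)\big)$, a constant independent of $i$. Dividing by $\log i$ and letting $i\to\infty$ gives $\lim_{i\to\infty}\mathbb{E}[H(\Delta\va_i)]/\log i\le\lim_{i\to\infty}\log(1+2(r+\kappa))/\log i=0$, so TDA is non-dispersive.

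There is no genuine obstacle here — the statement is a true corollary — but two small points warrant care. One must verify that the differential construction really introduces no new nonzero entries (it only adds, possibly cancelling, two nonnegative rectified maps), and that \Cref{thm:tra-survivors} may legitimately be invoked \emph{view by view}, which is fine because each view is an independent TRA instance with the same sub-Gaussian parameter and the same gate $\tau_i$. I would also note that \Cref{ass:tda-indep} (two-view independence of noise) is not actually used in this corollary: the entropy bound relies only on the marginal sub-Gaussianity of each view, and the independence assumption is needed solely for the consensus-survivor bound of \Cref{thm:tda-consensus}; it is carried in the statement only for consistency with the rest of the section.
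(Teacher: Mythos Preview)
Your proposal is correct and follows essentially the same argument as the paper: bound the support of $\Delta\va_i$ by the union of the two single-view supports, decompose each $S_i^{(t)}$ into relevant and noise survivors, apply \Cref{ass:bounded-relevant} and \Cref{thm:tra-survivors} view by view to get $\mathbb{E}[S_i^{(1)}+S_i^{(2)}]\le 2(r+\kappa)$, and finish with Jensen on $\log(1+\cdot)$. Your closing remark that \Cref{ass:tda-indep} is not actually needed for this corollary is correct and is a useful observation not made explicit in the paper.
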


\begin{proof}
TDA computes $\Delta \va = \va^{(1)} - \lambda \va^{(2)}$.
If $\Delta \va_{ij}\neq 0$, then at least one of $\va^{(1)}_{ij},\va^{(2)}_{ij}$ is non-zero, hence the support size satisfies
\[
S_i^{\Delta} \le S_i^{(1)} + S_i^{(2)}.
\]
Write $S_i^{(t)} = R_i^{(t)} + N_i^{(t)}$, where $R_i^{(t)}$ is the number of non-noise (relevant) survivors and $N_i^{(t)}$ is the number of noise survivors in view $t$.
By Assumption~\ref{ass:bounded-relevant}, $R_i^{(t)}\le r$ for all $i$ and $t\in\{1,2\}$.
By Theorem~\ref{thm:tra-survivors} (applied to each view) with $\beta\ge\sigma$, we have $\mathbb{E}[N_i^{(t)}]\le \kappa$.
Therefore,
\[
\mathbb{E}[S_i^{\Delta}]
\le \mathbb{E}[S_i^{(1)}] + \mathbb{E}[S_i^{(2)}]
\le 2(r+\kappa).
\]
Using the same entropy definition (on the $\ell_1$-normalized absolute weights), we have
\[
H(\Delta \va_i) \le \log(1+S_i^{\Delta}),
\]
so by Jensen's inequality,
\[
\mathbb{E}[H(\Delta \va_i)]
\le \mathbb{E}[\log(1+S_i^{\Delta})]
\le \log\!\big(1+\mathbb{E}[S_i^{\Delta}]\big)
\le \log\!\big(1+2(r+\kappa)\big).
\]
Finally,
\[
\lim_{i \to \infty} \frac{\mathbb{E}[H(\Delta \va_i)]}{\log i}
\le \lim_{i \to \infty} \frac{\log(1+2(r+\kappa))}{\log i}
= 0.
\]
\end{proof}

\begin{tbox}{Threshold Rectified Attention (Forward)}
\label{alg:threla-forward}
\begin{algorithmic}[1]
\Require $Q,K,V \in \mathbb{R}^{B \times H \times T \times D}$; threshold scale $\beta$; $\kappa$;  power $p$; block sizes $B_M,B_N$
\Ensure $O \in \mathbb{R}^{B \times H \times T \times D}$

\Function{ThresholdReLAForward}{$Q,K,V,\beta,p,B_M,B_N$}
  \For{$b \gets 1$ to $B$}
    \For{$h \gets 1$ to $H$}
      \For{$m \gets 0$ to $T-1$ step $B_M$} \Comment{query block start}
        \State $Q_m \gets Q[b,h,\,m:m{+}B_M,\,:]$ \Comment{$B_M \times D$}
        \State Initialize $A \gets \mathbf{0} \in \mathbb{R}^{B_M \times D}$ \Comment{FP32 accumulator}
        \State Compute per-row thresholds:
        \For{$i \gets 0$ to $B_M-1$}
          \State $\tau_m[i] \gets \beta \sqrt{\frac{2\log((m+i+1)/\kappa)}{D}}$
        \EndFor

        \For{$n \gets 0$ to $T-1$ step $B_N$} \Comment{key/value block start}
          \State $K_n \gets K[b,h,\,n:n{+}B_N,\,:]$ \Comment{$B_N \times D$}
          \State $V_n \gets V[b,h,\,n:n{+}B_N,\,:]$ \Comment{$B_N \times D$}
          \State $S \gets Q_m K_n^\top$ \Comment{$B_M \times B_N$}
          \State Apply causal mask (equivalent to the Triton mask):
          \State \hspace{1.2em} $S_{ij} \gets 0 \;\;\textbf{if}\;\; (n+j) > (m+i)$

          \State $X \gets S - \tau_m[:,\mathrm{None}]$ \Comment{broadcast $\tau$ over columns}
          \State $R \gets \max(X,0)$ \Comment{ReLU}
          \State $W \gets R^{p}$ \Comment{elementwise power}

          \State $A \gets A + W V_n$ \Comment{$(B_M\times B_N)(B_N\times D)\rightarrow (B_M\times D)$}
        \EndFor

        \State $O[b,h,\,m:m{+}B_M,\,:] \gets A$
      \EndFor
    \EndFor
  \EndFor
  \State \Return $O$
\EndFunction
\end{algorithmic}
\end{tbox}

\begin{tbox}{Threshold Rectified Attention (Backward)}
\label{alg:threla-backward}
\begin{algorithmic}[1]
\Require Saved $Q,K,V \in \mathbb{R}^{B\times H\times T\times D}$; upstream gradient $dO$; threshold scale $\beta$; $\kappa$; power $p$; block sizes $B_M,B_N$
\Ensure Gradients $dQ,dK,dV$ with same shapes as $Q,K,V$

\Function{ThresholdReLABackward}{$Q,K,V,dO,\beta,p,B_M,B_N$}
  \State Initialize $dQ \gets \mathbf{0}$, $dK \gets \mathbf{0}$, $dV \gets \mathbf{0}$ \Comment{accumulate in FP32}

  \For{$b \gets 1$ to $B$}
    \For{$h \gets 1$ to $H$}

      \For{$m \gets 0$ to $T-1$ step $B_M$} \Comment{query block}
        \State $Q_m \gets Q[b,h,\,m:m{+}B_M,\,:]$ \Comment{$B_M \times D$}
        \State $dO_m \gets dO[b,h,\,m:m{+}B_M,\,:]$ \Comment{$B_M \times D$}
        \State Compute thresholds for this query block:
        \For{$i \gets 0$ to $B_M-1$}
          \State $\tau_m[i] \gets \beta \sqrt{\frac{2\log((m+i+1)/\kappa)}{D}}$
        \EndFor

        \For{$n \gets 0$ to $T-1$ step $B_N$} \Comment{key/value block}
          \State $K_n \gets K[b,h,\,n:n{+}B_N,\,:]$ \Comment{$B_N \times D$}
          \State $V_n \gets V[b,h,\,n:n{+}B_N,\,:]$ \Comment{$B_N \times D$}

          \State Recompute scores: $S \gets Q_m K_n^\top$ \Comment{$B_M \times B_N$}
          \State Apply causal mask: $S_{ij} \gets 0$ if $(n+j)>(m+i)$

          \State $X \gets S - \tau_m[:,\mathrm{None}]$
          \State $R \gets \max(X,0)$
          \State $W \gets R^{p}$

          \State Compute elementwise derivative:
          \State \hspace{1.2em} $G \gets p \cdot R^{p-1} \cdot \mathbb{I}[X>0]$ \Comment{$B_M \times B_N$}

          \State Gradient wrt weights (streaming):
          \State \hspace{1.2em} $dW \gets dO_m V_n^\top$ \Comment{$(B_M\times D)(D\times B_N)$}
          \State \hspace{1.2em} $dS \gets dW \odot G$

          \State Accumulate $dQ$ for this block:
          \State \hspace{1.2em} $dQ[b,h,\,m:m{+}B_M,\,:] \mathrel{+}= dS K_n$ \Comment{$(B_M\times B_N)(B_N\times D)$}

          \State Accumulate $dV$ for this block:
          \State \hspace{1.2em} $dV[b,h,\,n:n{+}B_N,\,:] \mathrel{+}= W^\top dO_m$ \Comment{$(B_N\times B_M)(B_M\times D)$}

          \State Accumulate $dK$ for this block:
          \State \hspace{1.2em} $dK[b,h,\,n:n{+}B_N,\,:] \mathrel{+}= dS^\top Q_m$ \Comment{$(B_N\times B_M)(B_M\times D)$}
        \EndFor
      \EndFor

    \EndFor
  \EndFor

  \State \Return $dQ,dK,dV$
\EndFunction
\end{algorithmic}
\end{tbox}

\begin{tbox}{Threshold Differential Attention (TDA)}
\label{alg:differential-threla}
\begin{algorithmic}[1]
\Require $q_1,q_2,k_1,k_2,v \in \mathbb{R}^{B\times H\times T\times D}$; $\beta$; $\kappa$; $\lambda$; power $p$; \texttt{normalize} flag
\Ensure $O \in \mathbb{R}^{B\times H\times T\times D}$

\Function{DifferentialThresholdReLA}{$q_1,q_2,k_1,k_2,v,\beta,\lambda,p,\texttt{normalize}$}
  \If{\texttt{normalize}}
    \State $q_1 \gets \mathrm{L2Normalize}(q_1)$; \;\; $k_1 \gets \mathrm{L2Normalize}(k_1)$
    \State $q_2 \gets \mathrm{L2Normalize}(q_2)$; \;\; $k_2 \gets \mathrm{L2Normalize}(k_2)$
  \EndIf
  \State $O_1 \gets$ \Call{ThresholdReLAForward}{$q_1,k_1,v,\beta,p,B_M,B_N$}
  \State $O_2 \gets$ \Call{ThresholdReLAForward}{$q_2,k_2,v,\beta,p,B_M,B_N$}
  \State $\lambda_c \gets \min(1,\max(0,\lambda))$ \Comment{clamp to $[0,1]$}
  \State $O \gets O_1 - \lambda_c \cdot O_2$
  \State \Return $O$
\EndFunction
\end{algorithmic}
\end{tbox}

\twocolumn

\section{Empirical Validation of Assumptions}
\label[appendix]{app:theory-diagnostics}

To assess whether the assumptions underlying our analysis are informative in trained models, we report three diagnostics on the trained TDA checkpoint: raw-logit tail behavior, relevant-token survivor scaling, and cross-view correlation. These diagnostics are not intended to verify the assumptions exactly; rather, they test whether the empirical behavior is broadly consistent with the theoretical picture used to motivate the length-dependent threshold and the differential construction.

\subsection{Tail Behavior of Raw Logits}
\label{app:tail-diagnostics}

Our thresholding mechanism is motivated by an extreme-value theory under approximately sub-Gaussian noise (Assumption~\ref{ass:tra-subg}). To examine this, we analyze the raw logits before thresholding:
\[
\vs_{ij} = \vq_i^\top \vk_j / \sqrt{d}.
\]

\Cref{tab:appendix-tail-prob} reports empirical tail probabilities \(P(|\vs| > k\sigma)\) for representative layers, where \(\sigma\) denotes the per-layer standard deviation. While the trained logits are not exactly Gaussian, the empirical tail probabilities remain small and broadly consistent with quadratic-type decay rather than polynomial heavy-tailed behavior. We therefore interpret the sub-Gaussian assumption as an analytically useful approximation rather than an exact empirical model.

\begin{table}[t]
    \centering
    \small
    \begin{tabular}{@{}lccc@{}}
        \toprule
        \textbf{Layer} & \(\sP(|\vs|>2\sigma)\) & \(\sP(|\vs|>3\sigma)\) & \(\sP(|\vs|>4\sigma)\) \\
        \midrule
        0  & 0.0903 & 0.0423 & 0.0143 \\
        2  & 0.1145 & 0.0243 & 0.0062 \\
        4  & 0.0459 & 0.0193 & 0.0153 \\
        6  & 0.0606 & 0.0444 & 0.0181 \\
        8  & 0.0752 & 0.0142 & 0.0017 \\
        11 & 0.1574 & 0.0115 & 0.0003 \\
        \midrule
        $\gN(0,1)$ & 0.0455 & 0.0027 & 6.33e-5 \\
        \bottomrule
    \end{tabular}
    \caption{Empirical tail probabilities \(P(|\vs| > k\sigma)\) for raw logits \(\vs_{ij}=\vq_i^\top \vk_j/\sqrt{d}\), where \(\sigma\) is the per-layer standard deviation. }
    \label{tab:appendix-tail-prob}
\end{table}

\subsection{Relevant-Token Survivor Scaling}
\label{app:survivor-diagnostics}

Our non-dispersion analysis assumes that the number of relevant tokens surviving the adaptive threshold remains bounded (Assumption~\ref{ass:bounded-relevant}). To examine this empirically, we compute the number of surviving keys per query while restricting attention to non-padding relevant tokens only, and evaluate this statistic for context lengths \(n \in \{256,512,1024,2048\}\).

\Cref{fig:appendix-survivor-scaling} shows the mean relevant-token survivor count as context length increases for representative active layers. Across these layers, survivor counts remain stable or grow only mildly relative to a linear \(O(n)\) reference. For example, in Layer 4, the mean relevant-token survivor count increases from \(5.105\) at \(n=256\) to \(6.949\) at \(n=2048\), corresponding to only a \(1.36\times\) increase as the context length grows by \(8\times\). Similar behavior is observed in several other active layers. This supports the bounded-relevant-survivor assumption used in the non-dispersion analysis.

\begin{figure}[t]
    \centering
    \includegraphics[width=\linewidth]{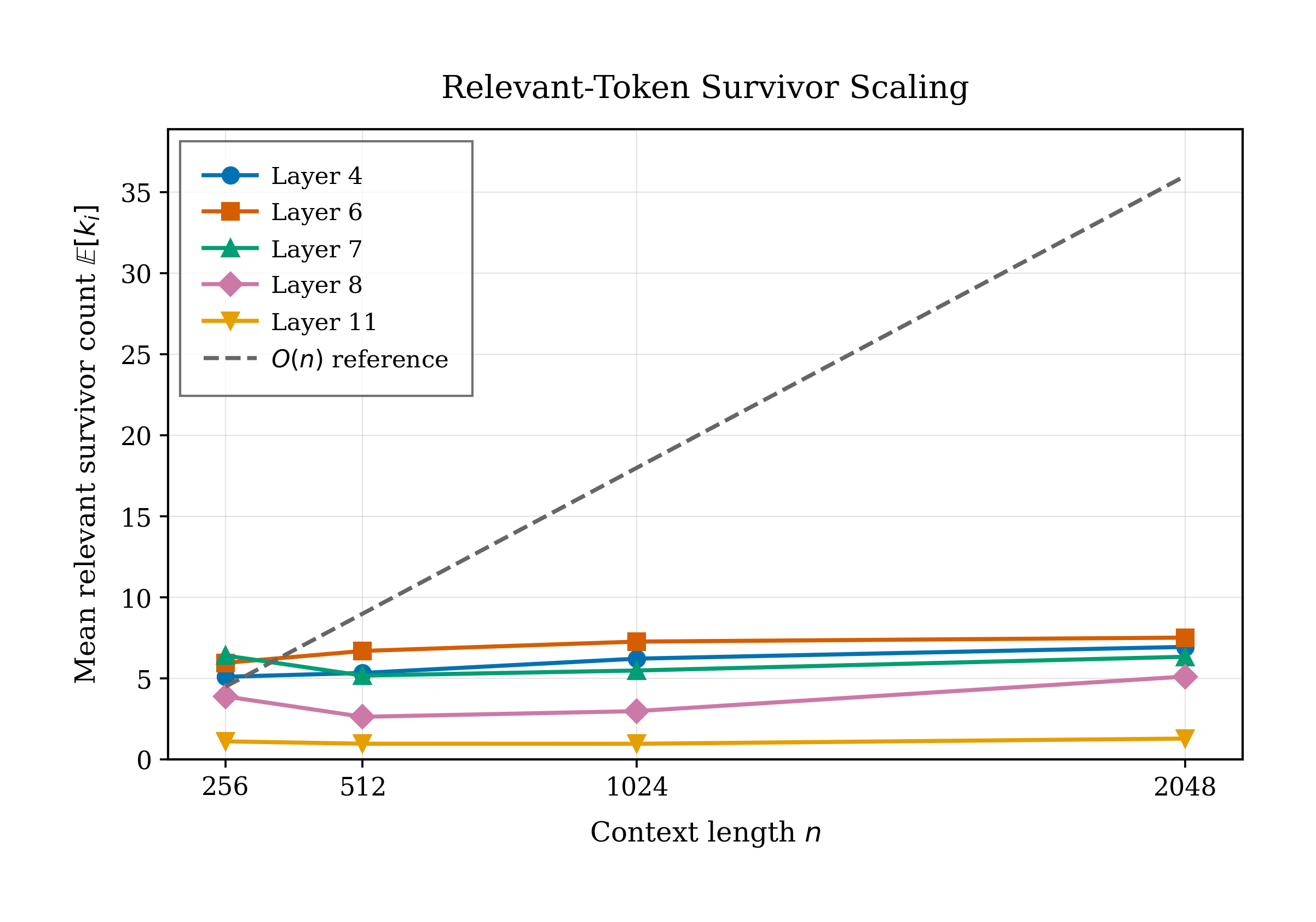}
    \caption{Relevant-token survivor scaling across context length for representative active layers.}
    \label{fig:appendix-survivor-scaling}
\end{figure}

\subsection{Cross-View Correlation}
\label{app:cross-view-diagnostics}

A key concern for differential constructions is whether the two views collapse into a redundant branch after training, which violates the independence of noise views assumption (Assumption~\ref{ass:tda-indep}). To assess this, we measure the Pearson correlation between the two thresholded score views and compare trained and untrained checkpoints.

\Cref{fig:appendix-cross-view-corr} compares layer-wise mean cross-view correlation for trained and untrained models. The trained model exhibits a modest increase in correlation relative to the untrained model, but the correlations remain low overall: the mean\footnote{Some trained layers yield undefined correlation due to one view being nearly constant (almost all zeros). We omit those layers from the average.
} over layer-wise values is \(0.1231\) for the trained checkpoint, compared with \(0.0752\) for the untrained checkpoint. This indicates that the two views do not collapse into a redundant copy after training.

\begin{figure}[t]
    \centering
    \includegraphics[width=\linewidth]{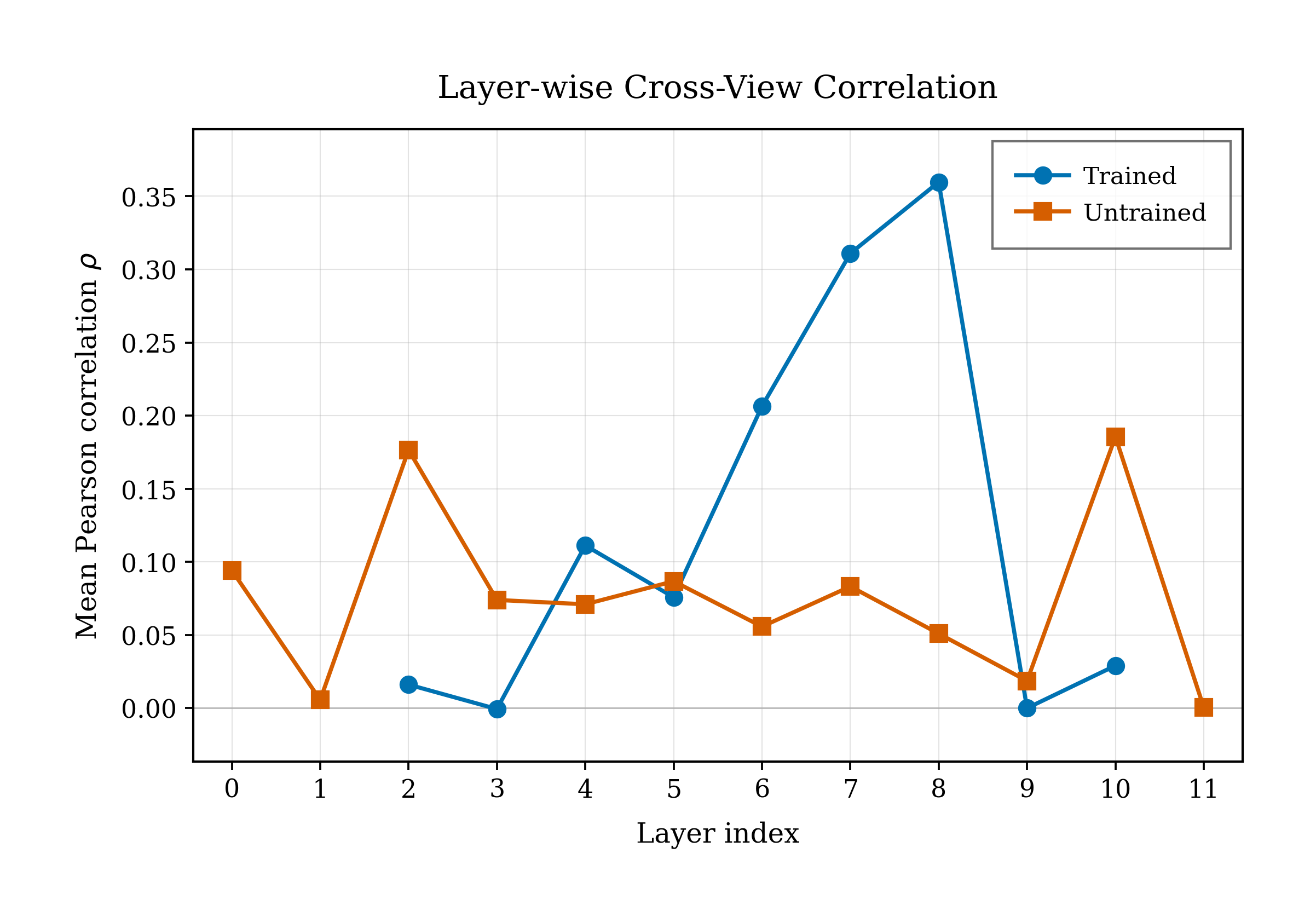}
    \caption{Layer-wise mean cross-view Pearson correlation for trained and untrained models. }
    \label{fig:appendix-cross-view-corr}
\end{figure}

\section{Triton Kernel for TRA and TDA}
\label[appendix]{app:Triton_Kernel}

\begin{figure*}[t]
    \centering
    \begin{subfigure}[b]{0.48\linewidth}
        \centering
        \includegraphics[width=\linewidth]{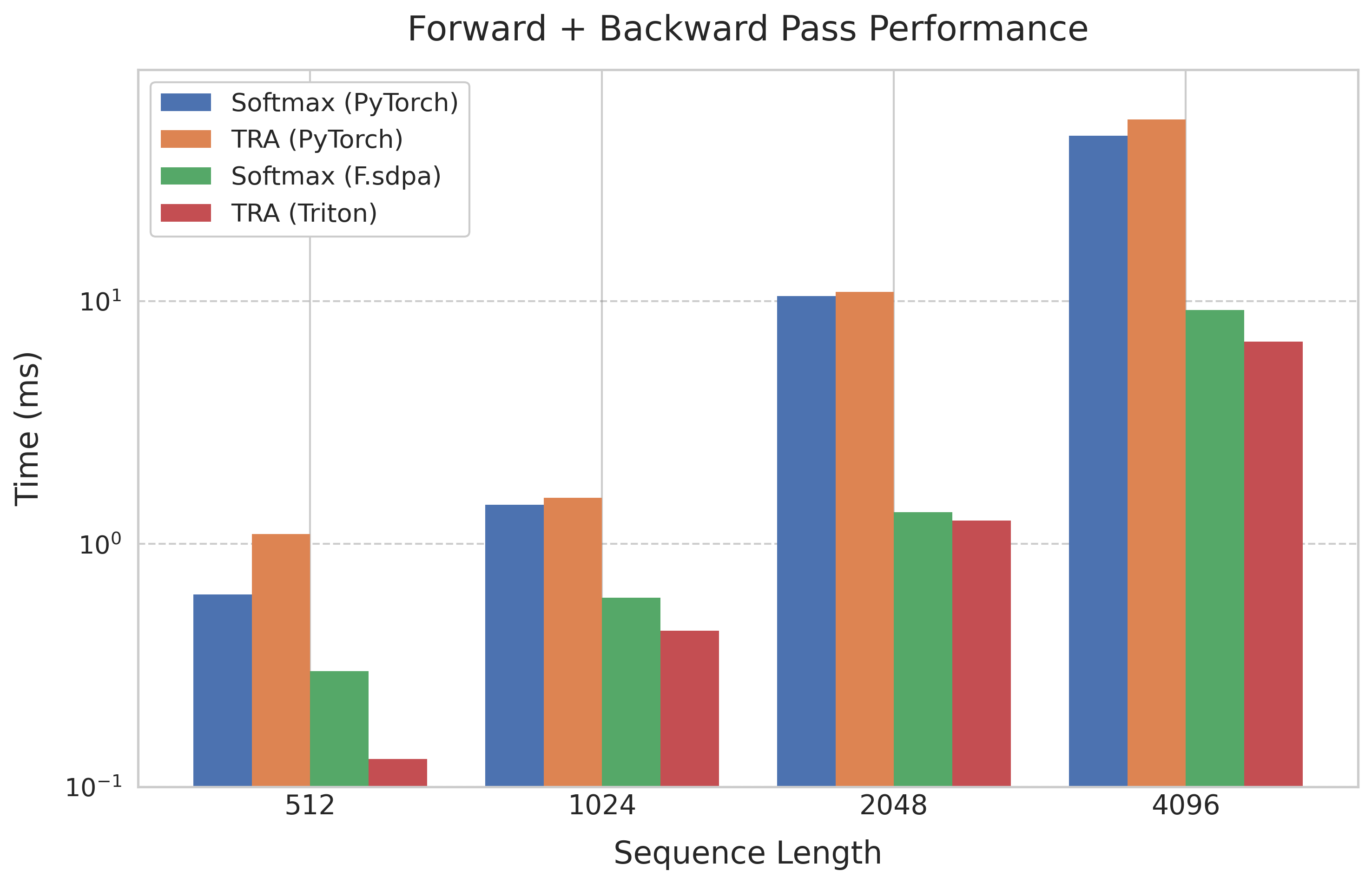}
        \caption{End-to-end forward+backward latency vs.\ sequence length.}
        \label{fig:triton}
    \end{subfigure}
    \hfill
    \begin{subfigure}[b]{0.48\linewidth}
        \centering
        \includegraphics[width=\linewidth]{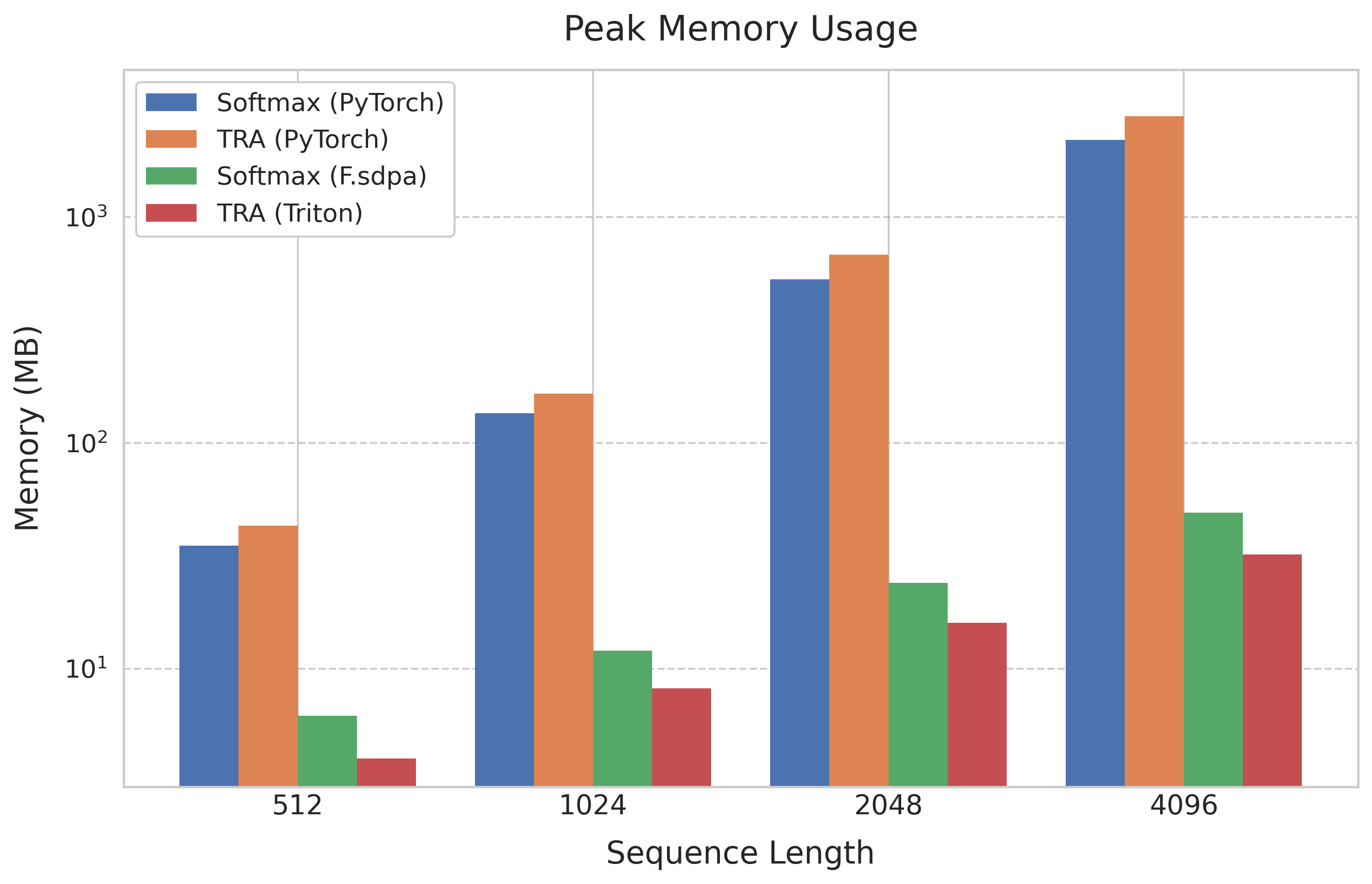}
        \caption{Peak GPU memory during forward+backward.}
        \label{fig:triton-memory}
    \end{subfigure}
    \caption{\textbf{Runtime and memory of fused Triton TRA.} We compare dense Softmax (PyTorch), naive TRA (PyTorch), fused SDPA, and our fused Triton TRA kernel under FP32.}
    \label{fig:triton-combined}
\end{figure*}

\paragraph{Problem setting.}
Given queries $Q$, keys $K$, values $V \in \mathbb{R}^{B \times H \times T \times D}$ (batch $B$, heads $H$, sequence length $T$, head dimension $D$),
Threshold Rectified Attention (TRA) computes:
\begin{align*}
\mO &= \left((\mQ\mK^\top - \tau)_+\right)^{p} \mV, \\
\tau(i) &= \beta \sqrt{\frac{2\log((i+1)/\kappa)}{D}},
\end{align*}    
where $i$ is the (0-indexed) query position, $\beta$ is a scalar threshold scale, $\kappa$ controls the expected number of spurious survivor, and $p$ is the power.
We use causal masking, i.e., query position $i$ can attend only to key positions $j \le i$.

We write $x_+=\max(x,0)$ elementwise. Tiling parameters are $B_M$ (query block size) and $B_N$ (key/value block size).
The implementation is streaming, similar to FlashAttention~\citep{dao2022flashattention,dao2023flashattention2}: it never materializes the full $T\times T$ attention matrix.

\paragraph{Setup.} We additionally benchmark end-to-end \emph{forward+backward} time and
\emph{peak} GPU memory for a single causal attention module at sequence lengths
$T\in\{512,1024,2048,4096\}$ under \textbf{FP32} settings, comparing:
\begin{enumerate}
    \item dense Softmax attention implemented with standard PyTorch operation;
    \item a naive PyTorch implementation of TRA that also materializes scores;
    \item PyTorch fused SDPA (\path{torch.nn.functional.scaled_dot_product_attention}) with Softmax;
    \item our fused Triton implementation of TRA.
\end{enumerate}

Our Triton kernel is streaming in the style of FlashAttention~\citep{dao2022flashattention,dao2023flashattention2}: it computes $\mQ\mK^\top$ in tiles, applies the causal mask and the row-dependent threshold $\tau_i$, and directly accumulates $\sum_j \vw_{ij} \vv_j$ without storing the dense attention matrix, avoiding both the softmax reduction and quadratic activation storage.

\paragraph{Results.} As shown in \Cref{fig:triton-combined}, the naive PyTorch TRA is not competitive: despite using cheaper
nonlinearities than softmax, it still pays the full cost of dense score materialization and unfused elementwise ops.
In contrast, the fused Triton kernel is consistently the fastest option across all tested lengths and improves with
sequence length: at $T{=}4096$ it reduces end-to-end time by several $\times$ compared to dense PyTorch attention and
also outperforms fused SDPA.

Memory gains are even more pronounced: while dense PyTorch implementations scale quadratically and reach multi-GB
peak usage at $T{=}4096$, our Triton kernel matches the FlashAttention-style $O(Td)$ activation footprint, staying in
the tens-of-MB regime. This efficiency is crucial for training TDA at long context, where sparsity is only useful if
the implementation avoids allocating dense attention tensors.

\section{Details in Passkey Retrieval Tests}
\label[appendix]{app:passkey}

Each trial constructs a single prompt consisting of five blocks separated by newline characters:

\begin{enumerate}
    \item \textbf{Task description (instruction).} A fixed instruction string:
\begin{quote}\small
There is an important info hidden inside a lot of irrelevant text. Find it and memorize them. I will quiz you about the important information there.
\end{quote}

    \item \textbf{Garbage prefix.} A prefix of length $n_{\mathrm{garbage\_prefix}}$ characters cut from a long ``garbage'' string:
\begin{quote}\small
The grass is green. The sky is blue. The sun is yellow. Here we go. There and back again.
\end{quote}
This base sentence is repeated many times to form a long buffer, and we take a random-length prefix substring.

    \item \textbf{Passkey line (needle).} A single line containing the passkey, where the key is repeated to reduce ambiguity:
\begin{quote}\small
The pass key is \{$x$\}. Remember it. \{$x$\} is the pass key.
\end{quote}
The passkey $x$ is sampled uniformly as an integer $x \sim \mathrm{Unif}\{1,\dots,50000\}$.

\item \textbf{Garbage suffix.} A suffix of length $n_{\mathrm{garbage\_suffix}}$ characters cut from the same repeated garbage buffer, where
\(
n_{\mathrm{garbage\_prefix}} + n_{\mathrm{garbage\_suffix}} = n_{\mathrm{garbage}}.
\)

    \item \textbf{Final query.} A fixed question that prompts the model to output the passkey:
\begin{quote}\small
What is the pass key? The pass key is
\end{quote}
\end{enumerate}

The final prompt is the newline-joined concatenation of these blocks:
\begin{align}
\texttt{prompt} = \texttt{instr} &\;\Vert\; \texttt{garbage\_prefix} \;\Vert\; \texttt{needle} \nonumber \\
&\;\Vert\; \texttt{garbage\_suffix} \;\Vert\; \texttt{query}, \nonumber
\end{align}

where $\Vert$ denotes concatenation with newline separators.

For each target length $T_{\mathrm{target}}\in\{500,1000,\dots,4000\}$, we construct a prompt by inserting a numeric passkey statement at a random location within a long span of irrelevant ``garbage'' text.
We run 100 trials per length with independently sampled passkeys and insertion positions, use greedy decoding, and count a trial as correct if the generated answer matches the ground-truth passkey (exact string match after stripping whitespace, or integer match).
This follows the standard needle-in-a-haystack evaluation protocol~\citep{mohtashami2023randomaccess,kamradt2023needle}.

\section{Further Experimental Details}

\label[appendix]{app:exp}

\begin{table*}[t]
\centering
\small
\setlength{\tabcolsep}{6pt}
\begin{tabular}{l r r r l}
\toprule
\textbf{Dataset} & \textbf{Train} & \textbf{Validation} & \textbf{Test} & \textbf{Task Type} \\
\midrule
\textbf{HellaSwag}          & 39,905 & 10,042 & 10,003 & Commonsense completion \\
\textbf{ARC-Easy}           & 2,251  & 570    & 2,376  & Multiple-choice science QA \\
\textbf{ARC-Challenge}      & 1,119  & 299    & 1,172  & Multiple-choice science QA \\
\textbf{OpenBookQA}         & 4,957  & 500    & 500    & Multiple-choice QA \\
\textbf{PIQA}               & 16,113 & 1,838  & 3,084  & Physical commonsense QA \\
\textbf{WinoGrande}     & 40,398 & 1,267  & 1,767  & Coreference resolution \\
\bottomrule
\end{tabular}
\caption{Statistics for language modelling benchmarks.}
\label{tab:short_context_stats}
\end{table*}

\begin{table*}[t]
\centering
\small
\setlength{\tabcolsep}{6pt}
\begin{tabular}{l l cccc}
\toprule
\textbf{Dataset} & \textbf{Domain} & \textbf{Train} & \textbf{Validation} & \textbf{Test} & \textbf{Avg.\ Length (Words)} \\
\midrule
\textbf{QMSum}            & Meetings          & 1,257  & 272  & 281  & 9,497 \\
\textbf{SummScreenFD}  & Screenplays       & 3,673  & 338  & 337  & 5,598 \\
\textbf{GovReport}        & Government        & 17,457 & 972  & 973  & 7,886 \\
\textbf{Qasper}           & Scientific papers & 2,567  & 1,726& 1,399& 3,629 \\
\bottomrule
\end{tabular}
\caption{Statistics for SCROLLS benchmarks used in long-context evaluation. Avg.\ length is reported in words.}
\label{tab:scrolls_stats}
\end{table*}

\subsection{Sparsity}
\label{app:sparsity_def}

We quantify sparsity by the fraction of attention entries that are \emph{exactly zero}.
For a given layer $\ell$ and head $h$, let $\mathbf{A}^{\ell,h}\in\mathbb{R}^{T\times T}$ denote the (masked) attention weight matrix produced by the mechanism, where causal masking enforces $\mathbf{A}^{\ell,h}_{ij}=0$ for $j>i$.
We call an entry \emph{inactive} if $\mathbf{A}^{\ell,h}_{ij}=0$ exactly, and define the per-head sparsity for the causal-masked transformer as

\begin{align*}
\mathrm{Sparsity}^{\ell,h}
\;&:=\;
\frac{\sum_{i=1}^{T}\sum_{j=1}^{T} \mathbf{1}\!\left[\mathbf{A}^{\ell,h}_{ij}=0\right]}
{\sum_{i=1}^{T}\sum_{j=1}^{T} \mathbf{1}\!\left[j\le i\right]}.
\label{eq:sparsity_def}
\end{align*}
We then aggregate across heads and layers by averaging:
\begin{equation*}
\mathrm{Sparsity}
\;:=\;
\frac{1}{LH}\sum_{\ell=1}^{L}\sum_{h=1}^{H}\mathrm{Sparsity}^{\ell,h}.
\end{equation*}

\subsection{Dataset Statistics}
We report the statistics for the datasets used in our standard and long-context evaluations in \Cref{tab:short_context_stats} and \Cref{tab:scrolls_long_context}.

For the standard language modeling evaluation, we use the following zero-shot benchmarks. These datasets typically consist of short contexts (questions or partial sentences) suitable for testing core reasoning capabilities.

For long-context evaluation, we select four diverse tasks from the SCROLLS benchmark~\citep{shaham2022scrolls}. These datasets feature input lengths significantly exceeding the training context of standard models, requiring the model to effectively extrapolate or manage long-range dependencies.

\subsection{Training Configuration}

\paragraph{Training} For standard configurations (GPT-2-162M), we use a total batch size of 524,288 tokens per gradient update step, with a mini-batch size of 16 sequences per GPU and a context length of 1024 tokens. Gradient accumulation steps are automatically calculated to achieve the desired total batch size across all GPUs. The dataset is tokenized using the GPT-2 tokenizer (tiktoken encoding) with a vocabulary size of 50,304 tokens.

All models are trained for 5 epochs with 38,146 steps per epoch, resulting in approximately 190,730 total training steps. We evaluate on the validation set every 250 steps, using 20 validation batches per evaluation.

\paragraph{Hyperparameters.} We employ a learning rate schedule with linear warmup followed by cosine decay. The maximum learning rate is set to $1 \times 10^{-3}$, with a minimum learning rate of $1 \times 10^{-4}$. Warmup is performed over 715 steps. We use a weight decay of 0.1 and set the random seed to 1337 for reproducibility. For models with RoPE positional encoding, we use a base frequency $\theta = 10,000$.
When extending models to longer context lengths, we employ NTK-aware scaling for RoPE-based models \citet{peng2024yarn,ntk-aware-scaling}, and further finetune them for 500 additional steps. We report single-run results for all evaluations, as the variance is small.

\subsection{Hardware and Infrastructure}
All experiments were conducted on NVIDIA A100-80GB GPUs. With Triton kernel optimizations enabled for threshold-based attention mechanisms, memory usage per GPU was approximately 35--45GB.

\section{Use of Large Language Model}
\label{app:use_llm}
We used large language models (LLMs) to assist with writing by refining human-written text and to support code development.

\end{document}